\def\VersionLong{}
\def\VersionFinal{}
\newcommand{\LongVersion}[1]{#1}
\newcommand{\ShortVersion}[1]{}
\newcommand{\LongVersion}[1]{}
\newcommand{\ShortVersion}[1]{#1}
\newenvironment{ienumeration}
{\ifdefined\VersionLong\begin{enumerate}\else\begin{inparaenum}[\itshape i\upshape)]\fi} % chktex 1 chktex 9 chktex 10
		{\ifdefined\VersionLong\end{enumerate}\else\end{inparaenum}\fi} % chktex 9
\newenvironment{oneenumeration}
{\ifdefined\VersionLong\begin{enumerate}\else\begin{inparaenum}[1)]\fi} % chktex 9 chktex 10
		{\ifdefined\VersionLong\end{enumerate}\else\end{inparaenum}\fi} % chktex 9
\newenvironment{myitemize}
	{\ifdefined\VersionLong\begin{itemize}\else\begin{inparaitem}[]\fi}
	{\ifdefined\VersionLong\end{itemize}\else\end{inparaitem}\fi} % chktex 9
\definecolor{mygreen}{rgb}{0,0.6,0}
\definecolor{mygray}{rgb}{0.5,0.5,0.5}
\definecolor{mymauve}{rgb}{0.58,0,0.82}
\definecolor{weborange}{RGB}{255,165,0}
\definecolor{darkblue}{rgb}{0, 0, 0.7}
\crefname{line}{\text{line}}{\text{lines}} % to remove the capital
\crefname{example}{\text{Example}}{\text{Examples}} % to remove the capital
\crefname{assumption}{\text{Assumption}}{\text{Assumptions}} % added Assumption
\crefname{algorithm}{\text{Algorithm}}{\text{Algorithms}}
\tikzstyle{every node}=[initial text=]
\tikzstyle{location}=[circle, minimum size=12pt, draw=black, fill=blue!10, inner sep=1pt] % rectangle, rounded corners
\tikzstyle{invariant}=[draw=black, dotted, inner sep=1pt] % xshift=1em, 
\tikzstyle{final}=[double distance=3pt]
\tikzstyle{accepting}=[final]
\pgfplotsset{every axis legend/.style={
% cells={anchor=center},% Centered entries
% inner xsep=3pt,
% inner ysep=2pt,
% nodes={inner sep=2pt,text depth=0.15em},
% anchor=south east,
% shape=rectangle, %box shape
% fill=white, %box inside
% draw=white, %box color
% at={(rel axis cs:0.90,1.0)} %box position right top
}}
\pgfplotsset{every axis/.append style={
                    % axis x line=middle,    % put the x axis in the middle
                    % axis y line=middle,    % put the y axis in the middle
                    % axis line style={<->}, % arrows on the axis
                    % xlabel={$x$},          % default put x on x-axis
                    % ylabel={$y$},          % default put y on y-axis
                    label style={font=\scriptsize},
                    tick label style={font=\scriptsize}
                    }}
\pgfplotsset{compat=newest}
\pgfplotsset{scaled x ticks=false}
\pgfplotsset{scaled y ticks=false}
\newcommand{\stylebenchmark}[1]{\textcolor{red!30!black}{\textsc{#1}}}
\newcommand{\stylebenchmark}[1]{\textsc{#1}}
\definecolor{coloract}{rgb}{0.50, 0.70, 0.30}
\definecolor{colorclock}{rgb}{0.4, 0.4, 1}
\definecolor{colorconst}{rgb}{0.50, 0.20, 0.00}
\definecolor{colordisc}{rgb}{1, 0, 1}
\definecolor{colorloc}{rgb}{0.4, 0.4, 0.65}
\definecolor{colorparam}{rgb}{1, 0.6, 0.0}
\definecolor{colorvar}{rgb}{0.6, 0.7, 1}
\definecolor{colorlvar}{rgb}{0.4, 0.4, .5}
\definecolor{colordparam}{rgb}{.9, 0.8, 0.0}
\newcommand{\init}{_0}
\newcommand{\Dist}{\mathscr{D}}
\newcommand{\partfun}{\nrightarrow} % Partial function
\newcommand{\powerset}[1]{2^{#1}}
\newcommand{\KTrue}{\ensuremath{\top}}
\newcommand{\KFalse}{\ensuremath{\bot}}
\newcommand{\action}{\ensuremath{a}}
\newcommand{\actionIn}{\ensuremath{a}}
\newcommand{\actionOut}{\ensuremath{b}}
\newcommand{\actionPair}{\ensuremath{(\actionIn, \actionOut)}}
\newcommand{\word}{\textcolor{colorok}{w}}
\newcommand{\wordIn}[1][]{\textcolor{colorok}{w#1_{\mathrm{in}}}}
\newcommand{\A}{\ensuremath{\mathcal{M}}}
\newcommand{\Actions}{\Sigma}
\newcommand{\ActionsIn}{\Actions_{\mathrm{in}}}
\newcommand{\ActionsOut}{\Actions_{\mathrm{out}}}
\newcommand{\ActionsPair}{\ensuremath{\ActionsIn \times \ActionsOut}}
\newcommand{\Lg}{\mathcal{L}}
\newcommand{\loc}{s} % location
\newcommand{\locinit}{\loc\init}
\newcommand{\Loc}{S} % set of locations
\newcommand{\LocFinal}{F}
\newcommand{\Edges}{E}
\newcommand{\Label}{\Lambda}
\newcommand{\run}{\rho}
\newcommand{\runs}{R}
\newcommand{\Av}{\mathrm{Av}}
\newcommand{\ActionsCtrl}{\ActionsIn^{1}}
\newcommand{\ActionsEnv}{\ActionsIn^{2}}
\newcommand{\actionCtrl}{\actionIn^{1}}
\newcommand{\actionEnv}{\actionIn^{2}}
\newcommand{\Cont}{\textsf{Cont}}
\newcommand{\Env}{\textsf{Env}}
\newcommand{\inflow}{\mathrm{inflow}}
\newcommand{\outflow}{\mathrm{outflow}}
\newcommand{\mdp}{\A}
\newcommand{\mdpEdges}{\Delta}
\newcommand{\mdpLabel}{\Lambda}
\newcommand{\mdploc}{s} % location
\newcommand{\mdplocinit}{\mdploc\init}
\newcommand{\mdpLoc}{S} % set of locations
\newcommand{\mdpLocinit}{\mdpLoc\init}
\newcommand{\mdpdistinit}{\mu\init}
\newcommand{\playerStrategy}{\sigma}
\newcommand{\envStrategy}{\tau}
\newcommand{\SA}{\mathcal{A}}
\newcommand{\SAEdges}{N}
\newcommand{\SAloc}{q} % location
\newcommand{\SAlocinit}{\SAloc\init}
\newcommand{\SALoc}{Q} % set of locations
\newcommand{\SATuple}{(\SALoc, \SAlocinit, \LocFinal, \Actions, \SAEdges)}
\newcommand{\Shield}{\mathcal{S}} % for Shield
\newcommand{\ShieldLoc}{\Loc_{\Shield}}
\newcommand{\shieldLocInit}{\loc_{0,\Shield}}
\newcommand{\shieldLoc}{\loc_{\Shield}}
\newcommand{\ShieldEdges}{\Delta_{\Shield}}
\newcommand{\ShieldLabel}{\Lambda_{\Shield}}
\newcommand{\G}{\ensuremath{\mathcal{G}}}
\newcommand{\GameLoc}{\ensuremath{G}}
\newcommand{\Gameloc}{\ensuremath{g}}
\newcommand{\Gamelocinit}{\Gameloc\init}
\newcommand{\Gamewin}{\ensuremath{F^{G}}}
\newcommand{\GameEdges}{\ensuremath{\Edges^{G}}}
\newcommand{\sinkLoc}{\loc_{\bot}}
\newcommand{\trainingData}{D}
\newcommand{\blue}{\mathit{blue}}
\newcommand{\red}{\mathit{red}}
\newcommand{\blueLoc}{\loc_{b}}
\newcommand{\redLoc}{\loc_{r}}
\newcommand{\mindepth}{\textsc{MinDepth}}
\newcommand{\GridWorld}{\stylebenchmark{GridWorld}}
\newcommand{\WaterTank}{\stylebenchmark{WaterTank}}
\newcommand{\CliffWalking}{\stylebenchmark{CliffWalk}}
\newcommand{\CarRacing}{\stylebenchmark{CarRacing}}
\newcommand{\Taxi}{\stylebenchmark{Taxi}}
\newcommand{\SelfDrivingCar}{\stylebenchmark{SelfDrivingCar}}
\newcommand{\SideWalk}{\stylebenchmark{SideWalk}}
\newcommand{\DynamicShielding}{\stylebenchmark{Shielding}}
\newcommand{\NoShield}{\stylebenchmark{Plain}}
\newcommand{\SafePadding}{\stylebenchmark{SafePadding}}
\newcommand{\tbcolor}{\cellcolor{green!25}} % Best result
\newcommand{\lst}{\mathrm{last}}
\newcommand{\cupdot}{\mathbin{\mathaccent\cdot\cup}}
\definecolor{vertfonce}{rgb}{0.0, 0.5, 0.0}
\definecolor{rougefonce}{rgb}{1, 0.0, 0.0}
\definecolor{cellcolor}{rgb}{.8, .8, 1}
\newcommand{\gennote}[3]{\todo[linecolor=#2,backgroundcolor=#2!25,bordercolor=#2,author=#3]{#1}}
\newcommand{\gennote}[3]{\todo[linecolor=#2,backgroundcolor=#2!25,bordercolor=#2,author=#3,disable]{#1}}
\newcommand{\tT}[1]{\gennote{#1}{blue}{TT}}
\newcommand{\sk}[1]{\gennote{#1}{green}{SK}}
\newcommand{\sP}[1]{\gennote{#1}{red}{SP}}
\newcommand{\ec}[1]{\gennote{#1}{magenta}{EC}}
\newcommand{\ih}[1]{{\gennote{#1}{purple}{IH}}}
\newcommand{\mw}[1]{\gennote{#1}{orange}{MW}}
\newcommand{\instructions}[1]{{\gennote{\bfseries #1}{red}{Instructions}}}
\newcommand{\rqanswer}[2]{\vspace{0.25em}\todo[inline,backgroundcolor=gray!30,author=\textbf{Answer to {#1}}]{#2}}
\ifdefined\VersionWithComments{}
\definecolor{colorok}{RGB}{80,80,150}
\definecolor{colorok}{RGB}{0,0,0}
\newcommand{\eg}{\textcolor{colorok}{e.\,g.,}\xspace}
\newcommand{\ie}{\textcolor{colorok}{i.\,e.,}\xspace}
\newcommand{\st}{\textcolor{colorok}{s.t.}\xspace}
\newcommand{\resp}{\textcolor{colorok}{resp.}\xspace}
\newacronym{edsm}    {EDSM}   {evidence-driven state merging}
\newacronym[longplural={discrete finite-state automata}]
                    {dfa}    {DFA}   {discrete finite-state automaton}
\newacronym{fsrs}    {FSRS}   {finite-state reactive system}
\newacronym{ltl}    {LTL}   {linear temporal logic}
\newacronym{mc}    {MC}   {Markov chain}
\newacronym[longplural={Markov decision processes}]
                    {mdp}    {MDP}   {Markov decision process}
\newacronym{pomdp}    {POMDP}   {partially observable Markov decision process}
\newacronym{ptmm}    {PTMM}   {prefix tree Mealy machine}
\newacronym{rpni}    {RPNI}   {regular positive and negative inference}
\newacronym{ppo}    {PPO}   {proximal policy optimization algorithm}
\newacronym{dqn}    {DQN}   {deep Q learning}
\def\orcidID#1{\smash{\href{https://orcid.org/#1}{\protect\raisebox{-1.25pt}{\protect\includegraphics{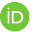}}}}}
\newcommand{\repeatthanks}{\textsuperscript{\thefootnote}}
\title{Dynamic Shielding for Reinforcement Learning in Black-Box Environments\thanks{\LongVersion{%
	This is the author (and extended) version of the manuscript of the same name published in the proceedings of the 20th International Symposium on Automated Technology for Verification and Analysis (ATVA 2022).
	The final version is available at \url{www.springer.com}.
	This version contains the detail of our methods, benchmarks, and experiment results.
	}}}
\author{
      Masaki Waga\orcidID{0000-0001-9360-7490}\inst{1} \and % chktex 8
      Ezequiel Castellano\orcidID{0000-0002-9604-9997}\inst{2} \and % chktex 8
      Sasinee Pruekprasert\orcidID{0000-0002-5929-9014}\inst{3}\thanks{The work was done during the employment of S.P. and T.T. at NII, Tokyo.}\and % chktex 8
      Stefan Klikovits\orcidID{0000-0003-4212-7029}\inst{2} \and % chktex 8
      Toru Takisaka\orcidID{0000-0002-5046-7480}\inst{4}\repeatthanks{}\and % chktex 8
      Ichiro Hasuo\orcidID{0000-0002-8300-4650}\inst{2,5} % chktex 26 chktex 11 chktex 8
}
\institute{%
    Graduate School of Informatics, Kyoto University, Kyoto, Japan
    \and
    National Institute of Informatics, Tokyo, Japan
    \and
    National Institute of Advanced Industrial Science and Technology, Tokyo, Japan
    \and
    University of Electronic Science and Technology of China, Chengdu, China
    \and
    The Graduate University for Advanced Studies, Tokyo, Japan
}
\begin{document}

% For all page numbers, except p.1
\pagestyle{plain}

\maketitle

% For page numbers on p.1
\thispagestyle{plain}

\ifdefined\VersionWithComments%
	\textcolor{red}{\textbf{This is the version with comments. To disable comments, comment out line~3 in the \LaTeX{} source.}}
\fi

\begin{abstract}
It is challenging to use reinforcement learning (RL) in cyber-physical systems due to the lack of safety guarantees during learning.
% \sk{Flip first sentence}
Although there have been various proposals to reduce undesired behaviors during learning, most of these techniques require prior system knowledge, and their applicability is limited.
This paper aims to reduce undesired behaviors during learning without requiring \emph{any} prior system knowledge.
%
% \todo{consider rephrasing this sentence}
We propose \emph{dynamic shielding}: an extension of a model-based safe RL technique called \emph{shielding} using\LongVersion{ data-driven} \emph{automata learning}.
% We propose \emph{dynamic} shielding: the combination of shielding, a model-based safe RL approach to prevent safety violations during learning, and automata learning, a data-driven model construction technique.
The dynamic shielding technique constructs an approximate system model in parallel with RL using a variant of the RPNI algorithm and suppresses undesired explorations due to the shield constructed from the learned model.
Through this combination, potentially unsafe actions can be foreseen before the agent experiences them.
Experiments show that % compared to a plain RL algorithm
 our dynamic shield significantly decreases the number of undesired events during training.
\keywords{reinforcement learning, shielding, automata learning}
\end{abstract}

%%%%%%%%%%%%%the following is a CnP from the draft
\tT{hello}
\sk{hello}
\sP{hello}
\ec{hello}
\ih{hello}
\mw{hello}

\instructions{16 pages, including references}

\section{Introduction}\label{section:introduction}
%%%%%%%%%%%%%%%%%%%%%%%%%%%%%%%%%%%%%%%%%%%%%%%%%%%%%%%%%%%%
%%%%%%%%%%%%%%%%%%%%%%%%%%%%%%%%%%%%%%%%%%%%%%%%%%%%%%%%%%%%

% \mw{Suggestion: Let's focus on the preemptive shielding because we mainly use it in the experiment.}
% \tT{would it be better to avoid using the term "unsafe actions", as "unsafe" may recall some readers safety-critical? Say "undesired behaviors" instead, for example?}
% \mw{I agree}
%\tT{How should we call our setting about model availability? Not model-free, as it means the state set is still available. Not sure about Black-box, as it looks like the word is often used when we "leave the system as black-box", while we try to construct a surrogate model}

\emph{Reinforcement learning} (RL)~\cite{SuttonB98} is a powerful tool for learning optimal (or near-optimal) controllers,
where the performance of controllers is measured by their long-term cumulative rewards. 
An agent in RL explores the environment by taking actions at each visited state, each of which yields a corresponding reward: 
RL aims for an efficient exploration by prioritizing actions that maximize the subsequent cumulative reward.
%Number of case studies show an illuminating performance of RL in finding controllers 
RL is particularly advantageous when the system model is unavailable~\cite{Mnih+15} or too large for an exhaustive search~\cite{Silver+16}.

Since an RL agent learns a controller through trial and error,
the exploration can lead to undesired behavior\LongVersion{ for the system}.
For instance, when the learning is conducted on a cyber-physical system,
such undesired behaviors can be harmful because they can damage the hardware, \eg{} by crashing into a wall.
%
%  is often more challenging 
% because the exploration by the agent can lead an undesired behavior.
% While RL demonstrates superior performance when a system simulator is available, 
% its application to physical systems is often more challenging 
% because the exploration by the agent can lead an undesired behavior.
% due to harmful actions that the agent may take during exploration. %that may be occurred by the agent's exploration.
%
\emph{Shielding}~\cite{AlshiekhBEKNT18}\footnote{The shield we use in this paper is the variant called \emph{preemptive shield} in~\cite{AlshiekhBEKNT18}. It is straightforward to apply our framework to the classic shield called \emph{post-posed shield}.\LongVersion{ See \cref{section:postposed_shielding} for the detail.}} is actively studied to address this problem.
A % (preemptive)
  shield is an external component that suggests a set of safe actions to an RL agent
 so that the agent can explore the environment with fewer encounters with undesired behaviors (\cref{figure:static_shield}).

\begin{figure*}[t]
	\centering
	\begin{subfigure}{.46\textwidth}
	\begin{tikzpicture}[font=\scriptsize\sffamily,scale=1,every node/.style={transform shape}]
		\tikzset{
			roundtangle/.style={rectangle,align=center,draw=black,fill=gray!25,rounded corners=.25em,minimum width=5em,minimum height=1.5em}
	    };

    	% RL 
		\node[roundtangle] (env) at (0,0) {Environment};
		\node[roundtangle,right=1.2 of env] (agent) {Learning Agent};

    	\draw[-latex] ($(env.east)+(0,0.5em)$) -- node[above,pos=.3] {observation} ($(agent.west)+(0,0.5em)$);
	   	\draw[-latex] ($(env.east)-(0,0.5em)$) -- node[above,pos=.4] {reward} ($(agent.west)-(0,0.5em)$);

     	\draw[-latex] (agent.south) |- node[below, pos=.75] {\color{blue}{safe} \color{black}{action} according to $\A$} ($(env.south)+(0,-1em)$) -- (env.south);

		% shielding 
    	\node[roundtangle,draw=blue,fill=blue!25,above=.55 of agent] (shield) {Shield $\Shield$};
    	
%    	\draw[blue,fill=blue!25,thick] (shield.north) -- ++(0,-1.25em) arc [radius=2em, start angle=-135, end angle=-45] -- ++(0em,1.25em) -- cycle;

     	\draw[-latex,thick,blue] ($(agent.west)+(-.6em,0.5em)$) |- (shield.west);
     	\draw[-latex,thick,blue] (shield) -- node[right,align=center] {safe actions\\ according to $\A$} (agent) ;
    	    	
		\node[roundtangle,draw=blue,fill=blue!25,above right=1.0 and -1.2 of env] (system) {System Model $\A$};
		\draw[-latex,thick,blue,dashed] (system.east) -| (shield.north) node[pos=.65,above] {\emph{
% (manual)
 a priori construction}};
    	    	
	\end{tikzpicture} %
         \caption{Conventional shielding~\cite{AlshiekhBEKNT18} based on a system model $\A$ given by a user. The shield is constructed before starting RL.}\label{figure:static_shield}
         % \caption{Conventional \emph{static} shielding~\cite{AlshiekhBEKNT18}. Before starting RL, the shield is constructed from a system model $\A$ given by a user.}%
         % \label{figure:static_shield}
	\end{subfigure}%
	\hfill
	\begin{subfigure}{.51\textwidth}
	\begin{tikzpicture}[font=\scriptsize\sffamily,scale=1,every node/.style={transform shape}]
		\tikzset{
			roundtangle/.style={rectangle,align=center,draw=black,fill=gray!25,rounded corners=.25em,minimum width=5em,minimum height=1.5em}
		};
		
		% RL 
		\node[roundtangle] (env) at (0,0) {Environment};
		\node[roundtangle,right=1.6 of env] (agent) {Learning Agent};
		
		\draw[-latex] ($(env.east)+(0,0.5em)$) -- node[above] {observation} ($(agent.west)+(0,0.5em)$);
		\draw[-latex] ($(env.east)-(0,0.5em)$) -- node[above] {reward} ($(agent.west)-(0,0.5em)$);
		\draw[-latex] (agent.south) |- node[below, pos=.75] {{\color{blue}{safe}} action according to $\A$} ($(env.south)+(0,-1em)$) -- (env.south);
		
		% shielding 
		\node[roundtangle,draw=blue,fill=blue!25,above=.55 of agent] (shield) {Shield $\Shield$};
		\draw[-latex,thick,blue]($(agent.west)+(-.5em,0.5em)$) |- (shield.west);
		\draw[-latex,thick,blue](shield) -- node[right,align=center] {safe actions\\ according to $\A$} (agent) ;

		% Learning		
		\node[roundtangle,draw=ForestGreen,fill=ForestGreen!25,left=1.9 of shield] (learner) {Automata Learner};
		\draw[-latex,ForestGreen,thick] ($(env.south)+(0,-1em)$) -| ($(learner.west)-(1em,0)$) -- (learner.west);
		
		\draw[-latex,ForestGreen,thick] ($(env.east)+(.3em,0.5em)$) |- ($(learner.east)-(0,0)$);

%		\draw[-latex,ForestGreen,dashed,thick] ($(learner.east)+(0,0.5em)$) to[out=20,in=160] node[above] {shield update} ($(shield.west)+(0,0.5em)$);
		
		\node[roundtangle,draw=ForestGreen,fill=ForestGreen!25] (system) at (1.2,1.7) {Approx. System Model $\A$};
		\draw[-latex,thick,ForestGreen,dashed] (system.east) -| (shield.north) node[pos=.75,above right = -.2 and -0.0,align=center] {\emph{regular}\\\emph{updates}};

		\draw[-latex,thick,ForestGreen,dashed] ($(learner.north)-(1em,0)$) |- (system.west) node[pos=.25,above left=-0.2 and 0.0,align=center] {\emph{regular}\\\emph{updates}};
		
	\end{tikzpicture}
         \caption{Our \emph{dynamic} shielding. An approximate model $\A$ is learned and updated in parallel with RL, and the shield is regularly updated.}\label{figure:dynamic_shield}
	\end{subfigure}
	\caption{Comparison of the conventional shielding and our dynamic shielding.% (DRAFT) Model-based shielding %(e.g.,~\cite{AlshiekhBEKNT18}) 
	% (left) and our proposed online shielding (right). 
	% While the former construct a shield before the agent starts learning, the latter dynamically updates it in parallel with the RL process.
 }%THE LEFT PIC IS FROM \cite{AlshiekhBEKNT18}
	\label{figure:shielding_scheme}
\end{figure*}
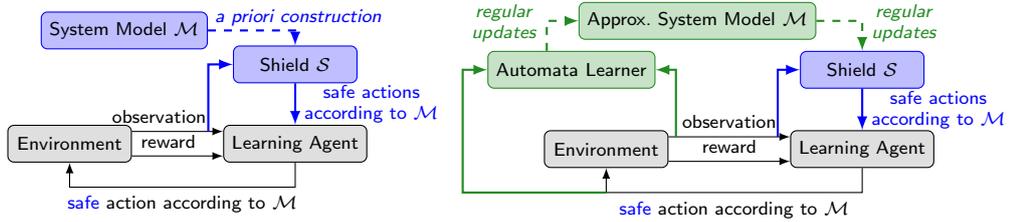

%-----------------------------------------------------------
\begin{example}[\WaterTank{}]%
 \label{example:water_tank:summary}
 % \ec{Shall we just replace the observations as: empty, safe, full?}
 Consider a 100 liter water tank with valves to control the water inflow.
%At each state, controller can control inflow by either opening or closing the inflow valve. 
 The \emph{controller} opens and closes the valve, and tries to prevent the water tank from becoming empty or full.
% (observations: \texttt{low} when empty, \texttt{safe}, \texttt{high} when full).
 The exact inflow cannot be controlled but can be observed, \st{} $\inflow \in \{0,1,2\}$.
 The tank also has a random outflow, \st{} $\outflow \in \{0,1\}$.
% In- and outflow cannot be controlled, only observed from the environment, \st{} $\inflow = \{0,1,2\}$ and $\outflow = \{0,1\}$.
% At every step the controller may decide to $\mathrm{open}$ or $\mathrm{close}$ the inflow valve.
% Afterwards, volume changes \st $\mathrm{volume}' = \mathrm{volume} + \inflow - \outflow$.
 A good shield prevents opening (\resp{} closing) the valve when the water tank is almost full (\resp{} empty).
Moreover, there must be at least three time steps between two consecutive valve position changes to prevent hardware failure.
Hence, the shield should also prevent changing the valve position when the last change was too recent.
% We can think of such a system as a 2-player game, with \Cont{} being the controller and \Env{} the environment.
% Note however that there is a minimum delay of three steps between two consecutive changes of the valve position. 
% We can think of such a system as a 2-player game, with \Cont{} being the controller and \Env{} the environment.
\end{example}
Most of the existing shielding techniques~\cite{AlshiekhBEKNT18,AvniBCHKP19,JansenKJSB20,ChengOMB19,HuntFMHDS21} for RL assume that the system model is at least partially available, and its formal analysis is feasible.
Thus, with few exceptions (\eg{}~\cite{HasanbeigAK20}), black-box systems have been beyond existing techniques.
However, this assumption of conventional shielding techniques limits the high applicability of RL, which is one of the major strengths of RL.\@

\paragraph{Dynamic shielding}

To improve the applicability of shielding for RL, we propose the dynamic shielding scheme (\cref{figure:dynamic_shield}).
Our goal is to \emph{prevent actions similar to those ones that led to undesired behavior in previous explorations}. % or that are close to another action caused an actoinexperience  experienced and actions close to what we have experienced}. % Mo or \emph{can we even forecast potentially undesired actions using the history of agent's failures, and let them avoid taking such actions before they try?}
In our dynamic shielding scheme, the shield is constructed and regularly updated using an approximate system model learned by a variant of \emph{the RPNI algorithm}~\cite{Oncina_1993} for passive automata learning~\cite{Lopez2016}.
% an online construction of a shield through the learned system model, which is updated in parallel to the agent's policy .
% The key question in our problem is: can we let the agent avoid taking undesired actions they experienced, or \emph{can we even forecast potentially undesired actions using the history of agent's failures, and let them avoid taking such actions before they try?} 
% We adopt \emph{passive automata learning}~\cite{Lopez2016} to realize this idea. 
Since the RPNI algorithm generates a system model consistent with the agent's experience,
a dynamic shield can prevent previously experienced undesired actions.
Moreover, since the RPNI algorithm can deem some of the actions similar, a dynamic shield can prevent undesired actions even without experiencing if the action is deemed unsafe.

It is, however, not straightforward to use a system model constructed by the original RPNI algorithm for shielding.
At the beginning of the learning, our knowledge of the system is limited, and the RPNI algorithm often deems a safe action as unsafe, which prevents necessary exploration for RL.\@
To infer the (un)safety of unexplored actions with higher accuracy, we introduce a novel variant of the RPNI algorithm tailored for our purpose. % chktex 36
Intuitively, our algorithm deems two actions in the training data similar % postpones merging states until$
only if there is a long example supporting it, while the original RPNI algorithm deems two actions similar unless there is an explicit counter example.
We also modified the shield construction to 
optimistically enable not previously seen actions,
% allow the actions with no information in the approximate system model
as otherwise, necessary explorations are also prevented.

We implemented our dynamic shielding scheme in Python and conducted experiments to evaluate its performance compared to two baselines: the plain RL without shielding and safe padding~\cite{HasanbeigAK20}, one of the shielding techniques applicable to black-box systems. 
Our experiments suggest that
 dynamic shielding prevents undesired exploration during training and
 often improves the quality of the resulting controller.
Although the construction and the use of dynamic shielding require some extra time, it is not prohibitive.

\paragraph{Contributions}
The following list summarizes our contributions.
\begin{itemize}
 \item We introduce the dynamic shielding scheme (\cref{figure:dynamic_shield}) using a variant of the RPNI algorithm for passive automata learning.
 \item We modify the RPNI algorithm and the shield construction so that the shield does not prevent necessary exploration, even if our prior system knowledge is limited.
 \item We experimentally show that our dynamic shielding scheme significantly reduces the number of undesired explorations during training.
\end{itemize}

% We merge these two parts into a \emph{safety game} and use \emph{game solving} techniques to obtain our shield.
% Our main contribution is the inference of the system abstraction (\ie the \ac{fsrs}) through online automata learning techniques, which we describe in \cref{section:our_method}.
% Before though, \cref{section:preliminaries} reviews the formal basis of our approach and \cref{section:passive_automata_learning} describes the passive automata learning technique.

% \todo[inline]{SK: Make sure that the previous part matches with this.}

% The schematics of the shielding approach is depicted in \cref{fig:flowschema}. 
% The two vital parts of our shielding method are the \emph{system abstraction} and the \emph{specification}, which we model as \ac{fsrs} and \emph{safety automaton}, respectively.
% We merge these two parts into a \emph{safety game} and use \emph{game solving} techniques to obtain our shield.
% Our main contribution is the inference of the system abstraction (\ie the \ac{fsrs}) through online automata learning techniques, which we describe in \cref{section:our_method}.
% Before though, \cref{section:preliminaries} reviews the formal basis of our approach and \cref{section:passive_automata_learning} describes the passive automata learning technique.

\subsection{Related works}\label{section:related_work}

The notion of shield is originally proposed in~\cite{BloemKKW15} as an approach for \emph{runtime enforcement}. In this line of research \cite{BharadwajBDKT19,BloemJKLLPpreprint,WWDW19}, a shield takes the role of an \emph{enforcer} that overwrites the output of the system when the specification is violated at runtime. Shielding in this context is fundamentally different from ours, where shields are used to block system inputs that incur unsafe outputs of the system.

Shielding for RL (or simply shielding) is categorized as a technique of \emph{safe RL}.
Using the taxonomy of~\cite{GarciaF15}, shielding is an instance of “teacher provides advice”; i.e., a shield as a teacher giving additional information to the learning agent to prevent unsafe exploration.
Such a use of a shield is first proposed in~\cite{AlshiekhBEKNT18}, and several probabilistic variants\LongVersion{ of shielding} are also proposed~\cite{JansenKJSB20,AvniBCHKP19,BoutonKNFKTpreprint}; they assure that the learning is safely done with high probability (but not necessarily with full certainty). 
In these works, a system model is necessary to construct a shield. % or a specification that directly constraints the learning agent's outputs (\eg{}~\cite{BloemKKW15}).
Some works propose shielding for inaccurate models~\cite{PrangerKTD0B21,HuntFMHDS21,ChengOMB19}, but they still require some prior knowledge of the system (\eg{} the nominal dynamics of the system).
%(\eg{} the structure of the MDP).
%%%7/17 Updated by Toru (end)

% \medskip

To the best of our knowledge, the existing work closest to ours is \emph{cautious RL}~\cite{HasanbeigAK20}:
it is also a shielding-based safe RL that does not require a system model (but can perform better with a system model).
To avoid unsafe events under uncertainty,
cautious RL learns an MDP in parallel with the RL process, and
its \emph{safe padding} blocks actions that let the agent come too close to an area where
an action may lead to undesired behavior according to the learned MDP.\@
% an unsafe action exists, % unsafe region\sk{What's that}\todo{rephrase this},
% in addition to unsafe actions.
%Instead of just blocking unsafe actions, 
%their \emph{safe padding} blocks %actions that 
%those which let the agent get too close to the unsafe region; 
That strong blocking policy works as a safety buffer against unexpected transitions. 

A major difference between our technique and safe padding is in the approximate model learning:
in safe padding, the observation space is directly used as the state space of the learned MDP, while we merge some of them based on similarity of the suffixes to generalize observations.
% that our technique generalizes observations while the latter does not.
\LongVersion{It is demonstrated in~\cite{HasanbeigAK20} that safe padding offers a promising safety assurance when prior knowledge of the system (\eg{} the system's dynamics without disturbance) is available.
However, it is unclear if, without generalization, it also reduces safety violations during learning % such a knowledge
 when such knowledge is unavailable in the beginning.}
In experiments,  
we demonstrate that generalization by automata learning effectively reduces safety violations.

\begin{figure}[tbp]
    \centering
\begin{tikzpicture}
\tikzset{
    rect/.append style={draw,fill=white,thick,font=\scriptsize,rectangle,inner sep=3pt,minimum height=.5cm, minimum width=3cm,align=center}
}

\node[rectangle,dotted,fill=gray!75,opacity=.25,minimum height=3.5cm,minimum width=11.5cm] at (2.5,-.85) {};

% boxes
\node[rect] (fsrs) {System Abstraction $\A$\\(as \emph{\acs{fsrs}} in \cref{def:FSRS})};
\node[rect,right=of fsrs] (sa) {Specification $\SA^{\varphi}$\\(as \emph{Safety Automaton} in \cref{def:SA})};
\node[rect,below right=.5 and 0.5 of fsrs.south] (sg) {\emph{Safety Game} $\G$ in \cref{def:safety_game}};
\node[rect,below=.5 of sg] (shield) {Shield $\Shield$ (as \emph{Mealy machine} in \cref{def:mealy_machine})};

\node[rect,draw=gray!50,above=0.7 of fsrs] (sys) {System $\A^{\envStrategy}$\\(representable as an \acs{mdp} in \cref{def:MDP})};
\node[rect,draw=gray!50,above=0.7 of sa] (spec) {Specification $\varphi$\\(\eg in LTL)};

% connections
\draw[->] (fsrs) -- (sg);
\draw[->] (sa) -- (sg);
\draw[->] (sg) -- (shield) node[pos=.5,right] {\footnotesize \emph{game solving}};

\draw[<->,dashed] (sys) -- (fsrs);
\draw[->,dashed] (spec) -- (sa);

% dashed area

\node[color=black!50,above=1.25 of sg] {\textbf{Shield Construction}};

\end{tikzpicture}    
\caption{Shield construction schema. Our main contribution is the system abstraction's inference through automata learning techniques.}%
\label{fig:flowschema}
\end{figure}
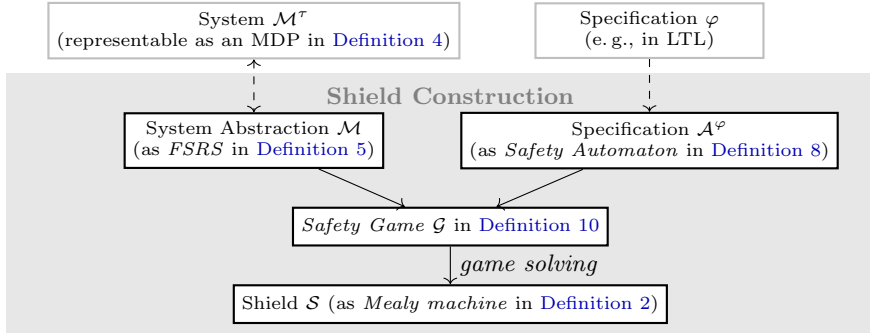

%%%%%%%%%%%%%%%%%%%%%%%%%%%%%%%%%%%%%%%%%%%%%%%%%%%%%%%%%%%%
%%%%%%%%%%%%%%%%%%%%%%%%%%%%%%%%%%%%%%%%%%%%%%%%%%%%%%%%%%%%
\section{Preliminaries}\label{section:preliminaries}
For a set $X$, we denote the set of probability functions over $X$ by $\Dist X$.
For a set $\Actions$, let $\Actions^*$ be the set of finite sequences over $\Actions$.
We denote the empty sequence by $\varepsilon$.
For any $\word \in \Actions^*$, the length is denoted by $|\word|$.
%-----------------------------------------------------------

We use \emph{Mealy machines} to formalize an abstract system model and a shield.

%-----------------------------------------------------------
\begin{definition}
 [Mealy machine]\label{def:mealy_machine}
    A \emph{Mealy machine} is a 6-tuple
    $\A = (\Loc, \locinit, \ActionsIn, \ActionsOut, \Edges, \Label)$, where:
    \begin{myitemize}
        \item $\Loc$ is a finite set of states,
        \item $\locinit \in \Loc$ is the initial state,
        \item $\ActionsIn$ and $\ActionsOut$ are finite alphabets for inputs and outputs\LongVersion{, respectively},
        \item $\Edges \colon \Loc \times \ActionsIn \partfun \Loc$ is a partial \emph{transition function}, and
        \item $\Label \colon \Loc \times \ActionsIn \partfun \ActionsOut$ is a partial \emph{output function} such that $\Label(\loc, \action)$ is defined if and only if $\Edges(\loc, \action)$ is defined.
    \end{myitemize}
\end{definition}
%-----------------------------------------------------------

%-----------------------------------------------------------
\begin{definition}
 [path, run, output]\label{def:run}
   Let $\A = (\Loc, \locinit, \ActionsIn, \ActionsOut, \Edges, \Label)$ be a Mealy machine and
   let $\wordIn = \actionIn_1,\actionIn_2, \dots \actionIn_n \in \ActionsIn^*$ be an input word.
    For a state $\loc \in \Loc$ of $\A$,
    the \emph{path} $\run$ of $\A$ from $\loc$ over $\wordIn$
    is the alternating sequence $\run = \loc, \action_1, \loc_1, \action_2, \dots, \action_n, \loc_n$ 
    of states $\loc_i \in \Loc$ and input actions $\actionIn_i \in \ActionsIn$ satisfying $\Edges(\loc, \actionIn_1) = \loc_1$ and $\Edges(\loc_{i-1}, \actionIn_i) = \loc_i$ for each $i \in \{2,\dots,n\}$.
    % When such a path exists, we denote $\loc \xrightarrow{\wordIn} \loc'$.
    A \emph{run} of a Mealy machine $\A$ is a path of $\A$ from the initial state $\locinit$.
    For a state $\loc \in \Loc$ of $\A$, 
    we write $\Edges(\wordIn, \loc)$ to denote the last state $\loc_n$ of the path $\run = \loc, \action_1, \loc_1, \action_2, \dots, \action_n, \loc_n$ of $\A$ from $\loc$ over $\wordIn$.
    The \emph{output} $\A(\wordIn,\loc) \in \ActionsOut$ of a Mealy machine $\A$ is defined by $\A(\wordIn,\loc) = \Label(\loc_{n-1}, \actionIn_n)$, where
    $\loc_{n-1} = \Edges(\wordIn', \loc)$ and 
    $\wordIn' = \actionIn_1, \actionIn_2, \dots, \actionIn_{n-1}$.
    %$\Label(\loc_{n-1}, \actionIn_n)$, where $\run = \loc, \action_1, \loc_1, \action_2, \dots, \action_n, \loc_n$ is the path of $\A$ from $\loc$ over $\wordIn$.
    %We denote the output $\A(\wordIn,\locinit)$ from the initial state $\locinit$ by $\A(\wordIn)$.
    We write $\Edges(\wordIn) = \Edges(\wordIn, \locinit)$ and $\A(\wordIn) = \A(\wordIn, \locinit)$.
\end{definition}
%-----------------------------------------------------------

%%%%%%%%%%%%%%%%%%%%%%%%%%%%%%%%%%%%%%%%%%%%%%%%%%%%%%%%%%%%
\subsection{Automata and games for system modeling}\label{subsection:games}
%%%%%%%%%%%%%%%%%%%%%%%%%%%%%%%%%%%%%%%%%%%%%%%%%%%%%%%%%%%%

% We define deterministic 2-player games as \emph{Mealy machines} with a specific input alphabet.

As shown in \cref{fig:flowschema}, we assume that the system is representable as a \ac{mdp}, and
we use \acp{fsrs}~\cite{AlshiekhBEKNT18} to abstract the \ac{mdp}.
More precisely, an \ac{fsrs} is a two-player deterministic game of the controller (\Cont{}) and the environment (\Env{}), where \acp{mdp}' probabilistic transitions are represented by \Env{} transitions.
% to represent the system behavior abstracting the probabilistic transitions of \acp{mdp}.
% In our case, an abstraction is defined as a system that \emph{simulates} the actual system, which we assume to be representable as a \ac{mdp}.
Note that \acp{mdp} are only used for the theoretical discussion in this paper.

% \todo{Q. Does MDP, FSRS have to be partial?}
% \reviewer{1}{Doubt: In Definition 2, A is a partial function; whereas in Definition 4, A (lambda) is not; it always returns a state and the output (this is the difference) with a certain probability. Does this mean the MDP always produces an output? Why? Is this not what learning automatas do?}
% \reviewer{1}{Also, the reviewer does not understand why there is no reward in the definition.}
%-----------------------------------------------------------
\begin{definition}
    [\Acf{mdp}]\label{def:MDP}
    An \ac{mdp} is a 5-tuple
    $(\Loc, \locinit, \ActionsIn, \ActionsOut, \mdpEdges)$ such that:
    \begin{myitemize}
        \item $\Loc$ is a finite set of states,
        \item $\locinit \in \Loc$ is the initial state,
        \item $\ActionsIn$ and $\ActionsOut$ are the finite set of input and output alphabets, respectively, and
        \item $\mdpEdges \colon \Loc \times \ActionsIn \partfun \Dist(\Loc \times \ActionsOut)$ is the probabilistic  \emph{transition function}.
    \end{myitemize}
\end{definition}
%-----------------------------------------------------------

%-----------------------------------------------------------
\begin{definition}
    [\acf{fsrs}]\label{def:FSRS}
    An \ac{fsrs} is a Mealy machine
    $\A = (\Loc, \locinit, \ActionsIn, \ActionsOut, \Edges, \Label)$ that satisfies the following:
    \begin{myitemize}
        \item $\ActionsIn = \ActionsCtrl \times \ActionsEnv$, where
        $\ActionsCtrl$ (\resp{} $\ActionsEnv$) is the set of actions of \Cont{} (\resp{} \Env{});
        \item for each $\loc \in \Loc$ and $\actionCtrl \in \ActionsCtrl$, there is $\actionEnv \in \ActionsEnv$ for which
        $\Edges(\loc, (\actionCtrl, \actionEnv))$ is defined.% \label{def:fsrs:totality}
    \end{myitemize}
\end{definition}
%-----------------------------------------------------------

%-----------------------------------------------------------
\begin{definition}
 [strategy]\label{def:strategy}
 For an \ac{fsrs}
 $\A = (\Loc, \locinit, \ActionsCtrl \times \ActionsEnv, \ActionsOut, \Edges, \Label)$,
 \emph{strategies} of \Cont{} and \Env{} are functions
 $\playerStrategy \colon \Pi_\A \to \Dist \ActionsCtrl$ and
 $\envStrategy \colon \Pi_\A \times \ActionsCtrl \to \Dist \ActionsEnv$, respectively, where $\Pi_\A$ is the set of runs of $\A$\LongVersion{\footnote%
 {
    This definition implies that we understand \acp{fsrs}  %(Definition~\ref{def:FSRS}) 
    as turn-based games rather than concurrent ones.  
 }}.
 For strategies $\envStrategy$ of \Env{}, we also require that $\tau(\run,\actionCtrl) (\{\actionEnv\}) > 0$ holds only if $\Edges(\run_\lst, (\actionCtrl,\actionEnv))$ is defined, where $\run_\lst$ is the last state of $\run$.
 A strategy is \emph{memoryless} if it is independent of the run except for the last state.
 % A strategy is \emph{pure} if it returns a Dirac distribution.
\end{definition}

We use an \ac{fsrs} as an abstraction of an \ac{mdp} because for an \ac{fsrs} $\A$ and a memoryless strategy $\envStrategy$ of \Env{}, there is a canonical \ac{mdp} $\A^{\envStrategy}$, where the actions of \Env{} are chosen by $\envStrategy$.
% Note that the \ac{fsrs} could be converted into an \ac{mdp} using condition \ref{def:fsrs:totality}) in \cref{def:FSRS} % (which we call \emph{totality for \Cont{}})
% and by specifying the strategy of \Env{}.
Formally, for $\A = (\Loc, \locinit, \ActionsCtrl \times \ActionsEnv, \ActionsOut, \Edges, \Label)$ and $\envStrategy$,
$\A^{\envStrategy}$ is $\A^\envStrategy = (\mdpLoc, \mdplocinit, \ActionsCtrl, \ActionsOut, \mdpEdges)$, where
for each $\loc, \loc' \in \Loc$, $\actionCtrl \in \ActionsCtrl$, and $\actionOut \in \ActionsOut$, $\mdpEdges(\mdploc, \actionCtrl)$ is such that
\[
  (\mdpEdges(\mdploc, \actionCtrl))(\mdploc', \actionOut) = (\envStrategy(\mdploc, \actionCtrl))(\{\actionEnv \in \ActionsEnv \mid \Edges(\mdploc,(\actionCtrl, \actionEnv))= \mdploc' \land \Label(\mdploc,(\actionCtrl, \actionEnv))= \actionOut \}).
\]
% By fixing a strategy $\tau$ of \Env{}, we get a corresponding \ac{mdp} in a canonical way, that is, a \ac{fsrs} $\A = (\Loc, \locinit, \ActionsCtrl\times\ActionsEnv, \ActionsOut, \Edges, \Label)$
% turns into an \ac{mdp} $\mdp^\tau = (\mdpLoc, \mdplocinit, \ActionsCtrl, \ActionsOut, \mdpEdges^\tau, \mdpLabel)$ such that, for each $\loc \in \Loc$ and $\actionCtrl \in \ActionsCtrl$,
% \[
% \mdpEdges^\tau(\mdploc, \actionCtrl, \mdploc') = \tau (\mdploc, \actionCtrl)(\{\actionEnv \in \ActionsEnv \mid \Edges(\mdploc,(\actionCtrl, \actionEnv))= \mdploc'\}).
% \]

By fixing both \Cont{} and \Env{} strategies $\playerStrategy$ and $\envStrategy$ of an \ac{fsrs} $\A$, we obtain a\LongVersion{ (purely)} stochastic structure $\A^{\playerStrategy, \envStrategy}$.
We define the language $\Lg(\A^{\playerStrategy, \envStrategy}) \subseteq {((\ActionsCtrl \times \ActionsEnv) \times \ActionsOut)}^{*}$ of $\A^{\playerStrategy, \envStrategy}$ as the set of sequences of input/output actions $((\actionIn^1_i, \actionIn^2_i), \actionOut_i) \in (\ActionsCtrl \times \ActionsEnv) \times \ActionsOut$ in the runs of $\A^{\playerStrategy, \envStrategy}$.

%-----------------------------------------------------------
\begin{example}[\WaterTank{} \ac{fsrs}]%
 \label{example:water_tank:fsrs}
 The Water Tank in \cref{example:water_tank:summary} is formalized as an \ac{fsrs} $\A = (\Loc, \locinit, \ActionsCtrl \times \ActionsEnv, \ActionsOut, \Edges, \Label)$, where:
 \begin{myitemize}
    \item $\mdpLoc = \{0, 1, \dots, 100 \}$;
    \item $\ActionsCtrl = \{\mathrm{open}, \mathrm{close}\}$;
    \item $\ActionsEnv = \inflow \times \outflow$, where $\inflow = \{0,1,2\}$ and $\outflow = \{0,1\}$;
    \item $\ActionsOut = \{\mathrm{low}, \mathrm{safe}, \mathrm{high}\}$;
    \item $\mdpEdges(\mdploc, (\actionCtrl, (n,m)))$ is defined if either $\actionCtrl = \mathrm{open}$ and $n \in \{1,2\}$ or $\actionCtrl = \mathrm{close}$ and $n = 0$;
    % \begin{itemize}
    %     \item $\mdploc \in \{1,\ldots,99\}$, $\actionCtrl = \mathrm{open}$ and $n \in \{1,2\}$;
    %     \item $\mdploc \in \{1,\ldots,99\}$, $\actionCtrl = \mathrm{close}$ and $n = 0$; or 
    %     \item $\mdploc \in \{0,100\}$; 
    % \end{itemize}
    \item $\mdpEdges(\mdploc, (\actionCtrl, (n,m))) = \max\{0, \min\{\mdploc + n - m, 100\}\}$;
    \item For any $(\actionCtrl,(n,m)) \in \ActionsCtrl \times \ActionsEnv$, we have $\mdpLabel(0, (\actionCtrl, (n,m))) = \mathrm{low}, \mdpLabel(100, (\actionCtrl, (n,m))) = \mathrm{high}$, and $\mdpLabel(\mdploc, (\actionCtrl, (n,m))) = \mathrm{safe}$ otherwise.
 \end{myitemize}
% \begin{itemize}
%     \item $\mdpLoc = \{0, 1, \ldots, 99, 100 \}$ (representing water levels);
%     \item $\ActionsCtrl = \{\mathrm{open}, \mathrm{close}\}$ (\Cont{}; Controller);
%     \item $\ActionsEnv = \inflow \times \outflow$, where $\inflow = \{0,1,2\}$ and $\outflow = \{0,1\}$ (\Env{}; Environment);
%     \item $\ActionsOut = \{\mathrm{low}, \mathrm{safe}, \mathrm{high}\}$;
%     \item $\mdpEdges(\mdploc, (\actionCtrl, (n,m)))$ is defined if
%     \begin{itemize}
%         \item $\mdploc \in \{1,\ldots,99\}$, $\actionCtrl = \mathrm{open}$ and $n \in \{1,2\}$;
%         \item $\mdploc \in \{1,\ldots,99\}$, $\actionCtrl = \mathrm{close}$ and $n = 0$; or 
%         \item $\mdploc \in \{0,100\}$; 
%     \end{itemize}
%     \item $\mdpEdges(\mdploc, (\actionCtrl, (n,m))) = \mdploc + n - m$ for $\mdploc \in \{1,\ldots,99\}$;
%     \item $\mdpEdges(\mdploc, (\actionCtrl, (n,m))) = \mdploc$ for $\mdploc \in \{0,100\}$ (they are recognized as control failures);
%     \item For any $(\actionCtrl,(n,m)) \in \ActionsCtrl \times \ActionsEnv$, we have $\mdpLabel(0, (\actionCtrl, (n,m))) = \mathrm{low}, \mdpLabel(100, (\actionCtrl, (n,m))) = \mathrm{high}$, and $\mdpLabel(\mdploc, (\actionCtrl, (n,m))) = \mathrm{safe}$ otherwise.
% \end{itemize}

The probabilistic behavior of the \WaterTank{} environment (\ie{} \Env{}) is such that
 \begin{ienumeration}
  \item the inflow of water is randomly chosen from $\{1,2\}$ when $\actionCtrl = \mathrm{open}$, and
  \item the outflow of the water is randomly chosen from $\{0,1\}$. 
 \end{ienumeration}
Such a behavior is formalized by a \Env{} strategy $\envStrategy$ such that:
\begin{myitemize}
    \item $\envStrategy(\mdploc,\mathrm{open})(\{(n,m)\}) = 0.25$ for each $(n,m) \in \{1,2\} \times \{0,1\}$, and
    \item $\envStrategy(\mdploc,\mathrm{close})(\{(0,m)\}) = 0.5$ for each $m \in \{0,1\}$.
\end{myitemize}
\end{example}
%-----------------------------------------------------------

\subsection{Safety automata for specifications}\label{section:specifications}
In the shielding methodology, the shield's specification is given as a \emph{safety automaton}.
As shown in \cref{fig:flowschema}, typically, this automaton is automatically generated from a temporal logic formula, \eg{} \ac{ltl}.
See, \eg{}~\cite{KL06} for the construction of an automaton from an \ac{ltl} formula.
% some specification in a temporal logic.
% For instance, in our implementation, we use \ac{ltl} as specification formalism and convert the formulas to safety automaton form using the SPOT tool~\cite{duret.16.atva2}.

%\begin{definition}
% [deterministic finite automata]\label{def:DFA}\mw{We use this for a specification.}
% A \emph{deterministic finite automaton (DFA)} is a 5-tuple
% $\A = (\Loc, \locinit, \LocFinal, \Actions, \Edges)$, where:
% \begin{itemize}
%  \item $\Loc$ is a finite set of states,
%  \item $\locinit \in \Loc$ is the initial state,
%  \item $\LocFinal \subseteq \Loc$ is the accepting states,
%  \item $\Actions$ is the finite set of alphabet,
%  \item $\Edges \colon \Loc \times \Actions \to \Loc$ is a \emph{transition function}.
% \end{itemize}
% A run of a DFA is defined similarly to that of a Mealy machine.
% For a DFA $\A$, the \emph{language} $\Lg(\A) \subseteq \Actions^*$ is $\Lg(\A) = \{\word \mid \loc_n \in \LocFinal$, where $\loc_0, \action_1, \loc_1,\dots, \action_n, \loc_n$ is the run of $\A$ over $\word\}$.
%\end{definition}

%-----------------------------------------------------------
\begin{definition}[Safety Automata]\label{def:SA}
    A \emph{safety automaton} is a 5-tuple
    $\SA = \SATuple$, where:
    \begin{myitemize}
        \item $\SALoc$ is a finite set of states,
        \item $\SAlocinit \in \SALoc$ is the initial state,
        \item $\LocFinal \subseteq \SALoc$ is the set of safe states,
        \item $\Actions$ is the finite set of alphabet, and
        \item $\SAEdges \colon \SALoc \times \Actions \to \SALoc$ is the \emph{transition function} satisfying $\SAEdges(\SAloc, \action) \in \LocFinal$ only if $\SAloc \in \LocFinal$.
    \end{myitemize}
    %	$\Lg(\A) = \{\word \mid \loc_n \in \LocFinal$, where $\loc_0, \action_1, \loc_1,\dots, \action_n, \loc_n$ is the run of $\A$ over $\word\}$.
\end{definition}
%-----------------------------------------------------------

% \todo{Say that thanks to the final condition of safety automata, this is enough}
A run of a safety automaton is defined similarly to that of a Mealy machine.
For a safety automaton $\SA$, the \emph{language} $\Lg(\SA) \subseteq \Actions^*$ of $\SA$ is the set of words $\word = \action_1, \action_2, \dots, \action_n$ such that
the\LongVersion{ unique} run $\SAloc_0, \action_1,\SAloc_1,\ldots,\action_n,\SAloc_n$ over $\word$ satisfies $\SAloc_n \in \LocFinal$.
By the definition of $\SAEdges$ and $\LocFinal$, $\Lg(\SA)$ is prefix-closed.
% \reviewer{1}{Even if it is stated that “We call…”, a specification can be a specification with no safety automaton recognizing it: what about unrealizable safety specifications? Are they not specifications?}
We call $\varphi \subseteq \Actions^*$ a \emph{satefy specification} if there is a safety automaton recognizing it. %\todo{define infinte runs of Moore machine}\mw{Since we only use safety LTL, probably we can formulate only using finite words, if it is simpler}
For an \ac{fsrs} $\A$ over $\ActionsIn$ and $\ActionsOut$, a \Cont{} strategy $\playerStrategy$ of $\A$, and a specification $\varphi \subseteq {(\ActionsPair)}^{*}$,
we say $\A$ \emph{satisfies} $\varphi$ under $\playerStrategy$ if for any \Env{} strategy $\envStrategy$, we have $\Lg(\A^{\playerStrategy, \envStrategy}) \subseteq \varphi$.
% run $\run$ of $\SA$
% every run of $\SA$ under $\sigma$ is in $\varphi$.
%$\Lg(\mdp^\sigma) \subseteq \varphi$. %\todo{$\mdp^\sigma$ is the Markov chain induced by $\mdp$ and $\sigma$. notions TBA}

% \mw{2022-04-16: We do not have enough space to add this. Let's not add it.}
% \todo{add an example specification of \WaterTank{}}

%%%%%%%%%%%%%%%%%%%%%%%%%%%%%%%%%%%%%%%%%%%%%%%%%%%%%%%%%%%%
\subsection{Shielding for safe reinforcement learning}%
\label{subsection:shielding}
%%%%%%%%%%%%%%%%%%%%%%%%%%%%%%%%%%%%%%%%%%%%%%%%%%%%%%%%%%%%

% In the next step
We use an \ac{fsrs} and a safety automaton to define a \emph{safety game} which we use to create a shield for safe RL.\@
First, we show the formal definition of shields.
% Recall that the input alphabet set $\ActionsIn$ of an \ac{fsrs} is of the form $\ActionsIn = (\{*\} \cupdot \ActionsCtrl) \times (\mdpLocinit \cupdot \ActionsEnv)$.

% \mw{Suggestion: If we use only preemptive shielding in the experiments, let's present only the preemptive shielding.}
%-----------------------------------------------------------
\begin{definition}
    [(Preemptive) Shield~\cite{AlshiekhBEKNT18}]\label{def:preemptive_shield}
    Let  $\A$ be an \ac{fsrs} as above.\@
    A \emph{shield} for $\A$ is a Mealy machine $\Shield = (\ShieldLoc, \shieldLocInit, \ActionsCtrl \times \ActionsEnv, \powerset{\ActionsCtrl}, \ShieldEdges, \ShieldLabel)$ \st{}
    for any $\shieldLoc \in \ShieldLoc$ and input actions $\action, \action' \in \ActionsCtrl \times \ActionsEnv$,
    we have $\ShieldLabel(\shieldLoc, \action) = \ShieldLabel(\shieldLoc, \action')$.
\end{definition}
%-----------------------------------------------------------

% An input word $\rho = (*, \hat\loc_0)(a_1, r_1) \ldots (a_k, r_k)$ for an \ac{fsrs} $\A$ %$w \in (\{*\}\times L_0) \times (\ActionsCtrl\times\ActionsEnv)^*$
% uniquely determines an input word $\rho_\mathcal{S} = (*, \hat\loc_0)(a_1, \hat\loc_1) \ldots (a_k, \hat\loc_k)$ for a shield $\mathcal{S}$ such that $\hat\loc_k = \Edges(\hat\loc_{k-1})$. 
For any input word $\wordIn \in {(\ActionsCtrl \times \ActionsEnv)}^*$, the shield $\Shield$ returns $\ShieldLabel(\wordIn)$ as the set of safe actions for \Cont{} after $\wordIn$ is processed in $\A$.
A shield $\Shield$ canonically induces a strategy $\sigma$ for an \ac{mdp} $\mdp^\tau$ generated by any $\tau$, namely a strategy $\sigma$ such that 
$\sigma(\hat\loc_0, a_1, \hat\loc_1, \ldots, a_k, \hat\loc_k)$ is the discrete uniform distribution over $\Label_\mathcal{S}(\wordIn)$.

Given an \ac{fsrs} $\A$ and a specification $\varphi$ realized by a safety automaton $\SA^\varphi$ (\ie{} $\Lg(\SA^\varphi) = \varphi$),
our goal is to construct a shield $\mathcal{S}$ such that 
for any \Env{}  strategy $\envStrategy$, the \ac{mdp} $\mdp^\tau$ satisfies $\varphi$ under any \Cont{} strategy $\sigma$ compatible with $\mathcal{S}$.
To this end, we consider a \emph{safety game} constructed by $\A$ and $\SA^\varphi$. 

%-----------------------------------------------------------
\begin{definition}
 [Safety Game]\label{def:safety_game}
    A \emph{2-player safety game} is an \ac{fsrs}
    $\G = (\GameLoc, \Gamelocinit, \ActionsCtrl \times \ActionsEnv, \{0,1\}, \GameEdges, \Gamewin)$.
\end{definition}
%-----------------------------------------------------------

For an \ac{fsrs} $\A$ and a safety automaton $\SA^{\varphi}$,
the parallel composition $\A \mathrel{||} \SA^{\varphi}$ is the safety game
$\A \mathrel{||} \SA^{\varphi} = (\GameLoc, (\loc, \SAloc), \ActionsCtrl \times \ActionsEnv, \{0,1\}, \GameEdges, \Gamewin)$, where
$\GameLoc = \Loc \times \SALoc$,
$\GameEdges ((\loc, \SAloc), (\actionIn^1, \actionIn^2)) = (\loc', \SAloc')$, $\loc' = \Edges(\loc, (\actionIn^1, \actionIn^2))$, $\SAloc' = \SAEdges(\SAloc, (\actionIn, \Label(\loc, (\actionIn^1, \actionIn^2))))$, and
$\Gamewin$ is such that $\Gamewin((\loc,\SAloc), (\actionIn^1, \actionIn^2)) = 1$ if and only if $\SAEdges(\SAloc, ((\actionIn^1, \actionIn^2), \Label(\loc, (\actionIn^1, \actionIn^2)))) \in \LocFinal$.

% We identify the output function $\Gamewin: \GameLoc \to  \{0,1\}$ with the set $\{\Gameloc \in \GameLoc \mid \Gamewin(\Gameloc) = 1\}$. 

We say $\Gameloc \in \GameLoc$ is a \emph{safe state} if there is a \Cont{} strategy $\playerStrategy$ such that,
for any \Env{} strategy $\envStrategy$ and $\word = ((\actionIn^1_1, \actionIn^2_1), \actionOut_1), ((\actionIn^1_2, \actionIn^2_2), \actionOut_2), \dots, ((\actionIn^1_n, \actionIn^2_n), \actionOut_n) \in \Lg(\A^{\sigma, \tau})$, the unique run $\Gameloc_0, (\actionIn^1_1, \actionIn^2_1),\Gameloc_1, \dots, (\actionIn^1_n, \actionIn^2_n), \Gameloc_n$ of $\A$ over $\word$ satisfies $\Gameloc_n \in \Gamewin$.
Intuitively, a state $\Gameloc$ is safe if there is a \Cont{} strategy $\playerStrategy$ such that the run always remains in safe states, regardless of the \Env{} actions.
%The set of all winning states of \Cont{} can be computed by e.g.~\cite{TBA}.

We utilize the shield construction algorithm in~\cite{AlshiekhBEKNT18} to the above safety game $\G$.
Namely, the shield generated from $\G$ is an \ac{fsrs} $\Shield = (\GameLoc, \Gamelocinit, \ActionsCtrl \times \ActionsEnv, \powerset{\ActionsCtrl}, \GameEdges, \ShieldLabel)$, where
% $\ShieldLabel$ is the \emph{winning strategy} of \Cont{}, \ie{}
for each state $\loc \in \GameLoc$, $\ShieldLabel$ assigns the set of \Cont{} actions such that
for any \Env{} strategy, we remain in the safe states forever.
\subsection{The RPNI algorithm for passive automata learning}%
\label{section:passive_automata_learning}
We use a variant of the \LongVersion{\ac{rpni}}\ShortVersion{\acs{rpni}} algorithm~\cite{Oncina_1993} to learn an approximate system model in parallel with RL.\@  % that generates a Mealy machine instead of a \LongVersion{\ac{dfa}}\ShortVersion{\acs{dfa}}.
See, \eg{}~\cite{Lopez2016} for the detail.
% Our work builds on the \LongVersion{\ac{rpni}}\ShortVersion{\acs{rpni}} algorithm~\cite{Oncina_1993}, which we briefly review in this section. % Iand its extension RPNI2 algorithm~\cite{Dupont96} for \emph{incremental} learning.
% We use a variant of the original \LongVersion{\ac{rpni}}\ShortVersion{\acs{rpni}} algorithm that generates a Mealy machine instead of a \LongVersion{\ac{dfa}}\ShortVersion{\acs{dfa}}.
% Thus, we formulate it with a \ac{ptmm}~\cite{Lopez2016} instead of a prefix tree acceptor dedicated to \LongVersion{\ac{dfa}}\ShortVersion{\acs{dfa}} learning.
% \reviewer{1}{Doubt: Page 7, lines 20-22: Does this assumption mean that all the knowledge is processed “prefix to prefix”, where at each moment the received (finite) word is taken as the “input” word? Clarify it, please. It seems it is, since in page 8, lines 16-18 it is stated that the RL provides PRNI with inputs and outputs; thus, the training data is from the RL process.}
%
Given a finite training data $\trainingData \subseteq \ActionsIn^* \times \ActionsOut$,
the \LongVersion{\ac{rpni}}\ShortVersion{\acs{rpni}} algorithm constructs a Mealy machine $\A$ that is \emph{consistent} with the training data $\trainingData$,
\ie{} for any $(\wordIn, \actionOut) \in \trainingData$, we have $\A(\wordIn) = \actionOut$.
For simplicity, we assume that the training data is prefix-closed, \ie{}
if $\trainingData$ contains $(\wordIn, \actionOut)$,
for any prefix $\wordIn[']$ of $\wordIn$
and for some $\actionOut' \in \ActionsOut$,
$\trainingData$  contains $(\wordIn[], \actionOut')$.
%
% we know the output $\A(\wordIn)$ of the Mealy machine under learning for an input word $\wordIn$,
% we also know the output for any prefix of $\wordIn$.
This assumption holds in our dynamic shielding scheme and does not harm its applicability.
Since we learn a Mealy machine, we assume that the output in the training data is uniquely determined by the input word,
\ie{} for each $(\wordIn, \actionOut),(\wordIn['], \actionOut') \in \trainingData$,
$\wordIn = \wordIn[']$ implies $\actionOut = \actionOut'$.

The \LongVersion{\ac{rpni}}\ShortVersion{\acs{rpni}} algorithm creates the \ac{ptmm} $\A_{\trainingData}$ from the training data $\trainingData$ and constructs a Mealy machine $\A$ by merging the states of $\A_{\trainingData}$.
The \ac{ptmm} $\A_{\trainingData}$ is the Mealy machine such that the states are the input words in the training data, and the transition and output functions are $\Edges(\wordIn, \actionIn) = \wordIn \cdot \actionIn$ and $\Label(\wordIn, \actionIn) = \actionOut$
if $(\wordIn\cdot\actionIn, \actionOut) \in \trainingData$ for some $\actionOut \in \ActionsOut$, and otherwise, undefined.
\SetKwFunction{FConsistent}{compatible}
% After constructing the \ac{ptmm} $\A_{\trainingData}$ from the training data $\trainingData$, the \LongVersion{\ac{rpni}}\ShortVersion{\acs{rpni}} algorithm builds a Mealy machine by greedily merging states of $\A_{\trainingData}$.
When merging states $\loc$ and $\loc'$ of $\A_{\trainingData}$,
we require them to be \emph{compatible}, \ie{} the merging of $\loc$ and $\loc'$ must not cause any nondeterminism to
make the learned Mealy machine the consistent with the training data.
The \LongVersion{\ac{rpni}}\ShortVersion{\acs{rpni}} algorithm greedily merges states $\loc$ and $\loc'$ of $\A_{\trainingData}$ as far as they are compatible.
% In we merge the states of the \ac{ptmm}, we need 
%
% Before merging states of a Mealy machine, 
% we assure absence of nondeterminism in the transitions and the labeling.
\begin{comment}
 The compatibility can be checked, \eg{} by a simple recursion. %merging two states and its children.\sk{I don't understand this}
 See, \eg{}~\cite{Lopez2016} for a detailed explanation.
 We denote \FConsistent{$\loc,\loc'$} if the states $\loc$ and $\loc'$ of a Mealy machine are compatible.
\end{comment}
% See \cref{appendix:RPNI_detail} for pseudocode\LongVersion{ of the RPNI algorithm}.

%%%%%%%%%%%%%%%%%%%%%%%%%%%%%%%%%%%%%%%%%%%%%%%%%%%%%%%%%%%%
%%%%%%%%%%%%%%%%%%%%%%%%%%%%%%%%%%%%%%%%%%%%%%%%%%%%%%%%%%%%
\section{Dynamic shielding with online automata inference}\label{section:our_method}
%%%%%%%%%%%%%%%%%%%%%%%%%%%%%%%%%%%%%%%%%%%%%%%%%%%%%%%%%%%%
%%%%%%%%%%%%%%%%%%%%%%%%%%%%%%%%%%%%%%%%%%%%%%%%%%%%%%%%%%%%
Here, we introduce our dynamic shielding scheme in \cref{figure:dynamic_shield}, where the shield is constructed from the \ac{fsrs} inferred in parallel with the RL process.
Since our dynamic shielding scheme does not require the system model, we can apply it to black-box systems.

% \mw{I make subsection as many as possible to clarify the structure of this section during writing. Due to the space limitation, most of them will be removed in the submission.}

%%%%%%%%%%%%%%%%%%%%%%%%%%%%%%%%%%%%%%%%%%%%%%%%%%%%%%%%%%%%
\subsection{Dynamic shielding scheme}
%%%%%%%%%%%%%%%%%%%%%%%%%%%%%%%%%%%%%%%%%%%%%%%%%%%%%%%%%%%%

In conventional shielding (\cref{figure:static_shield}), the shield is constructed from the system model and provides safe actions to the learning agent.
See also \cref{fig:flowschema} for the shield creation schema.
By shielding, we can ensure the safety of the exploration in RL if the given system model correctly abstracts the actual system.
However, the use of a predefined system model limits the applicability of shielding.
Specifically, we cannot use the conventional shielding scheme for black-box systems.

In dynamic shielding (\cref{figure:dynamic_shield}), using the \LongVersion{\ac{rpni}}\ShortVersion{\acs{rpni}} algorithm, we learn a system model in parallel with RL, and prevent exploring undesired actions according to the learned model.
More precisely, all the inputs and the outputs in RL are given to the \LongVersion{\ac{rpni}}\ShortVersion{\acs{rpni}} algorithm, and we obtain an \ac{fsrs} $\A$ in \cref{fig:flowschema}.
From $\A$ and the given specification $\varphi$, we generate a shield $\Shield$, and prevent undesired exploration using $\Shield$.
We continuously reconstruct the shield along with the RL process.
Since we learn \ac{fsrs} from observations, the learned \ac{fsrs} may be incomplete or inconsistent with the actual system, which causes the following challenges.

% The shield is continuously constructed along with the reinforcement learning process. %growth of the observation by the automata learner and the exploration in reinforcement learning. % grows along with the reinforcement learning process, and shield is updated accordingly.

% In our dynamic shielding scheme, we construct a shield utilizing the \ac{fsrs} inferred by the \ac{rpni} algorithm.
% Since such an \ac{fsrs} may be incomplete or inconsistent with the actual system, we need to address the following challenges.

%%%%%%%%%%%%%%%%%%%%%%%%%%%%%%%%%%%%%%%%%%%%%%%%%%%%%%%%%%%%
\subsection{Challenge 1: Incompleteness of the learned \ac{fsrs}}\label{subsec:sink_state}
%%%%%%%%%%%%%%%%%%%%%%%%%%%%%%%%%%%%%%%%%%%%%%%%%%%%%%%%%%%%

Since the \LongVersion{\ac{rpni}}\ShortVersion{\acs{rpni}} algorithm is based on merging of nodes of the prefix tree representing the training data,
% Due to the nature of the \LongVersion{\ac{rpni}}\ShortVersion{\acs{rpni}} algorithm,
 the inferred \ac{fsrs} $\A$ has partial transitions when there are unexplored actions from some of the states of $\A$.
Specifically, there may be a state $\loc \in \Loc$ of the \ac{fsrs} $\A$ and a \Cont{} action $\actionIn^1$ such that
$\Edges(\loc, (\actionIn^1, \actionIn^2))$ is undefined for any \Env{} action $\actionIn^2$.
If we construct a shield from such an \ac{fsrs} by the algorithm in~\cite{AlshiekhBEKNT18},
the shield prevents the \Cont{} action $\actionIn^1$
because there is no transition labeled with $\actionIn^1$ to stay in the safe states.
However, since we have no evidence of safety violation by the \Cont{} action $\actionIn^1$, this interference is against the ``minimum interference'' policy of shielding~\cite{AlshiekhBEKNT18}. % \tT{ref added}\mw{Thanks!}
Moreover, such interference can harm the performance of the synthesized controller because it limits the exploration in RL.

To minimize the interference, we modify the safety game construction so that the undefined destinations in the \ac{fsrs} are deemed safe.
More precisely, we create a fresh sink state $\sinkLoc$ that is safe in the safety game, and make it the destination of the undefined transitions in the \ac{fsrs}.
We remark that the use of such an additional sink state does not allow any \Cont{} action with evidence of safety violation because even if a \Cont{} action $\actionIn^1$ leads to a safe state for one \Env{} action $\actionIn^2$, the \Cont{} action $\actionIn^1$ is prevented if there is another \Env{} action $\tilde{\actionIn}^2$ leading to a violation of the specification $\varphi$.

The safety game construction is formalized as follows.
%------------------------------------------------------------
\begin{definition}
For an \ac{fsrs} $\A = (\Loc, \locinit, \ActionsCtrl\times\ActionsEnv, \ActionsOut, \Edges, \Label)$ and 
a safety automaton $\SA^\varphi = (\SALoc^\varphi, \SAlocinit^\varphi, \LocFinal, \ActionsOut, \SAEdges^\varphi)$,
their compositions is the safety game $\G = (\GameLoc, \Gamelocinit, \ActionsCtrl \times \ActionsEnv, \{0,1\}, \GameEdges, \Gamewin)$ such that:
\begin{myitemize}
 \item
 $\GameLoc = (\Loc \cup \{\sinkLoc\}) \times \SALoc^\varphi$;
 \item
 $\Gamelocinit = (\locinit, \SAlocinit^\varphi)$;
 \item
 $\GameEdges$ is
       $\GameEdges((\loc,\SAloc^\varphi), (\actionIn^1,\actionIn^2)) = (\Edges(\loc,(\actionIn^1,\actionIn^2)),$ $\SAEdges^\varphi(\SAloc^\varphi, \Label(\loc, (\actionIn^1,\actionIn^2))))$ if $\Edges(\loc,(\actionIn^1,\actionIn^2))$, is defined, and otherwise,
       $\GameEdges((\loc,\SAloc^\varphi), (\actionIn^1,\actionIn^2)) = (\sinkLoc, \SAEdges^\varphi(\SAloc^\varphi, \Label(\loc, (\actionIn^1,\actionIn^2))))$;
 \item
 $\Gamewin = (\Loc \times \LocFinal) \cup( \{\sinkLoc\} \times \SALoc^\varphi)$.
\end{myitemize}
\end{definition}
%------------------------------------------------------------

%  such that 
% \[
% \Label_\mathcal{S}((\loc, \loc^\varphi)) = \{a \in \ActionsCtrl \mid \forall r \in \ActionsEnv. \Edges^\mathcal{G}((\loc,\loc^\varphi), (a,r)) \uparrow \cup \Edges^\mathcal{G}((\loc,\loc^\varphi), (a,r)) \in W\}.
% \] 

% the shield construction is the same as

% Let $W \in \GameLoc$ be the set of all safe states of \Cont{} in $\G$. The shield generated by $\G$ is an \ac{fsrs} $\mathcal{S} = (\GameLoc, \Gamelocinit, \ActionsCtrl \times \ActionsEnv, \powerset{\ActionsCtrl}, \Edges^\mathcal{G}, \Label_\mathcal{S})$ such that
% \[
% \Label_\mathcal{S}((\loc, \loc^\varphi)) = \{a \in \ActionsCtrl \mid \forall r \in \ActionsEnv. \Edges^\mathcal{G}((\loc,\loc^\varphi), (a,r)) \uparrow \cup \Edges^\mathcal{G}((\loc,\loc^\varphi), (a,r)) \in W\}.
% \] 

%%%%%%%%%%%%%%%%%%%%%%%%%%%%%%%%%%%%%%%%%%%%%%%%%%%%%%%%%%%%
\subsection{Challenge 2: Precision in automata learning}\label{subsection:min_depth}
%%%%%%%%%%%%%%%%%%%%%%%%%%%%%%%%%%%%%%%%%%%%%%%%%%%%%%%%%%%%

% \ih{This does not really sound like overfitting. Maybe we should avoid saying overfitting}
% \ih{Generalization is one of the keywords here}
In the \LongVersion{\ac{rpni}}\ShortVersion{\acs{rpni}} algorithm, the generalization of the training data is realized by the state merging.
Such generalization allows a dynamic shield to foresee potentially unsafe actions before the learning agent experiences them.

Since the \LongVersion{\ac{rpni}}\ShortVersion{\acs{rpni}} algorithm aims to construct a minimal \ac{fsrs}, it greedily merges the states as long as there is no evidence of inconsistency.
However, when the training data is limited,
there may not be evidence of the inconsistency of states, even if they must be distinguished according to the (black-box) ground truth.
In such a case,
the greedy merging in the \LongVersion{\ac{rpni}}\ShortVersion{\acs{rpni}} algorithm may decrease the precision of the learned \ac{fsrs}.
This is especially the case at the beginning of the training because the training data $\trainingData$ is small.
% our dynamic shielding scheme
% In our dynamic shielding scheme, especially in the beginning, the size of the training data is limited,
%
% we usually experiences the lack of 
%
% , due to the lack of training data,
% the \ac{rpni} algorithm often merges the states that must be distinguished according to the (black-box) ground truth.
Moreover, such an imprecise dynamic shield may even harm the quality of the controller synthesized by RL because the dynamic shield may prevent necessary exploration when it deems a safe action to be unsafe.

% decreases the precision of the constructed Mealy machine, it does not cause a significant issue in the whole workflow.
% In our dynamic shielding scheme, however, such a wrong state merging 
% prevents

% have a quite negative impact to the whole process.
% % This is because the training data in the automata learning is sampled from the reinforcement learning process, which avoids the unsafe exploration according to the dynamic shield.

%  sampling is independent of the constructed Mealy machine.
% although 

% this does not cause a big issue

%  because the training data sampling is independent of the constructed Mealy machine.

To prevent such too aggressive merging, we require additional evidence in the state merging.
Namely, we modify the \LongVersion{\ac{rpni}}\ShortVersion{\acs{rpni}} algorithm so that states $\loc$ and $\loc'$ can be merged only if
 there are paths from $\loc$ and $\loc'$ over a common word $\word \in {(\ActionsIn \times \ActionsOut)}^*$ longer than a threshold $\mindepth{}$.
% satisfying $|\word| > \mindepth{}$, where $\mindepth{}$ is the threshold
% of length at least $\mindepth{}$.
 This idea is related to the \ac{edsm}~\cite{LPP98}, where similar evidence is used in prioritizing the merged states. %the states with the largest evidence are merged to improve the precision.
In our implementation, we adaptively decide\LongVersion{ the threshold} $\mindepth$ so that $\mindepth{}$ is large when the mean length of the episodes is short.
This typically makes the merging more aggressive through the learning process.
Nevertheless, further investigation of $\mindepth$ is future work.

\subsection{Theoretical validity of our dynamic shielding}
%%%%%%%%%%%%%%%%%%%%%%%%%%%%%%%%%%%%%%%%%%%%%%%%%%%%%%%%%%%%

We show that our dynamic shielding scheme assures the safety of RL when the training data is large enough to construct an \emph{abstraction} of the actual system.
%
% First, we formalize the notion of \emph{abstraction} of \acp{fsrs}.%  and the
% \todo{Move the definition of the abstraction here}
% We next define an \emph{abstraction} of a reactive system. 
% Recall that we are not given a model of the system, and we would like to reconstruct it by automata learning. 
% An abstraction is an over-approximation of the system that we ideally would like to have.

%%%%%%Obsolete: merged into Def~\ref{def:run}%%%%%%%%%%%%%
%For a Moore machine $\A = (\Loc, \locinit, \ActionsIn, \ActionsOut, \Edges, \Label)$ 
%and a word $w \in \ActionsIn^*$, there is a unique state $\loc$ that is reached from $\locinit$ by processing the input sequence $w$ by $\A$. 
%Slightly abusing the notation, we write $\Edges(w)$ to denote such a state. 
%Formally, we define
%  $\Edges(\varepsilon) = \locinit$, and if $\Edges(w)$ is defined for a given $w \in \ActionsIn^*$, then we let
%  $\Edges(wa) = \Edges(\Edges(w),a)$ for each $a \in \ActionsIn$, provided that  $\Edges(\Edges(w),a)$ is defined. 
%  We also write $\Label (w)$ to denote $\Label(\Edges(w))$ (again, when $\Edges(w)$ is defined).
%%%%%%%%%%%%%%%%%%%%%%%%%%%%%%%%%%%%%%%%%%%%%%%%%%%%%%%%%%

\begin{definition}
	[abstraction]\label{def:simulation}% \tT{Is it equivalent to existence of a simulation relation that includes $(\locinit, \locinit')$? Perhaps yes}
        % \mw{From the existence of such a simulation relation to this simulation: yes and the proof is by a straightforward induction thanks to the finiteness of $\word$.}
        % \mw{From this simulation to the existence of such a simulation relation to this simulation: yes and the proof is by the construction of the simulation relation $R = \{(\loc, \loc') \mid \exists \word. \locinit \xrightarrow{\word} \loc, \locinit' \xrightarrow{\word} \loc'\}$.}
 For \acp{fsrs}
 % Let 
 $\A = (\Loc, \locinit, \ActionsCtrl \times \ActionsEnv, \ActionsOut, \Edges, \Label)$ and
 $\A' = (\Loc', \locinit', \ActionsCtrl \times \ActionsEnv, \ActionsOut, \Edges', \Label')$,
 $\A'$ \emph{abstracts} $\A$ if for each $w \in {(\ActionsCtrl \times \ActionsEnv)}^*$ if $\Edges(w)$ is defined, $\Edges'(w)$ is also defined, and $\Label(w) = \Label'(w)$ holds.
\end{definition}

\begin{theorem}[safety assurance by a dynamic shield]%
 \label{theorem:validity}
 Let $\A$ be an \ac{fsrs}, 
 let $\A'$ be its abstraction, 
 let $\varphi$ be a specification realized by $\SA^\varphi$, and 
 let $\Shield$ be a shield generated by $\A'$ and $\SA^\varphi$.
 For any strategy $\envStrategy$ of \Env{} in $\A$,
 the MDP $\mdp^\envStrategy$ satisfies $\varphi$ under any strategy $\playerStrategy$ generated by $\Shield$.
\end{theorem}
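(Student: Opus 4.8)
The plan is to prove the statement directly: fix an \Env{} strategy $\envStrategy$ and a \Cont{} strategy $\playerStrategy$ generated by $\Shield$, and show that every input/output word $w \in \Lg(\A^{\playerStrategy, \envStrategy})$ lies in $\varphi = \Lg(\SA^\varphi)$; by the definition of satisfaction this is exactly the assertion that $\mdp^\envStrategy$ satisfies $\varphi$ under $\playerStrategy$. The argument runs two plays in lockstep over the same sequence of input pairs $(\actionIn^1_i, \actionIn^2_i)$: the \emph{actual} play inside the true system $\A$ that produces $w$, and the \emph{shield's internal} play inside the safety game $\G = \A' \mathrel{||} \SA^\varphi$ that was built from the learned abstraction $\A'$.

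First I would make the shield guarantee explicit. Writing $W \subseteq \GameLoc$ for the set of safe states of $\G$ (\cref{def:safety_game}), the construction of $\Shield$ enables a \Cont{} action $\actionIn^1$ at a game state $\Gameloc$ only if $\GameEdges(\Gameloc, (\actionIn^1, \actionIn^2)) \in W$ for \emph{every} \Env{} action $\actionIn^2$. Because $\playerStrategy$ is generated by $\Shield$, along $w$ each chosen $\actionIn^1_i$ is enabled at the game state reached after the prefix $(\actionIn^1_1, \actionIn^2_1) \cdots (\actionIn^1_{i-1}, \actionIn^2_{i-1})$. A routine induction on $i$ then shows that every game state visited along $w$ stays in $W$: the base case $\Gamelocinit \in W$ holds whenever a shield-compatible $\playerStrategy$ exists at all (if $\Gamelocinit \notin W$ the shield enables no action at the start and the claim is vacuous), and the step uses that from a state in $W$ the enabled set is non-empty and all its successors return to $W$.

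The crux is transferring this safety statement from the model $\A'$ to the true system $\A$ via \cref{def:simulation}. Since $w$ is produced by a genuine run of $\A$, for every prefix input $\wordIn[']$ the transition $\Edges(\wordIn['])$ is defined; by abstraction $\Edges'(\wordIn['])$ is then also defined and $\Label'(\wordIn[']) = \Label(\wordIn['])$. I draw two conclusions. First, the $\A'$-component of the game state never enters the optimistic sink $\sinkLoc$ along $w$, so each visited game state lies in $\Loc' \times \SALoc^\varphi$; combined with membership in $W \subseteq \Gamewin = (\Loc' \times \LocFinal) \cup (\{\sinkLoc\} \times \SALoc^\varphi)$, this forces the $\SA^\varphi$-component into $\LocFinal$ at every step. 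Second, because $\A$ and $\A'$ emit the same outputs along $w$, the $\SA^\varphi$-run maintained internally by $\Shield$ (driven by $\A'$'s outputs) is identical to the run of $\SA^\varphi$ on the \emph{actual} output word of $w$. Combining the two, the actual output word keeps $\SA^\varphi$ in $\LocFinal$ throughout, hence is accepted, so $w \in \varphi$. As $w$ and $\envStrategy$ were arbitrary, the theorem follows.

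I expect the main obstacle to be the second conclusion: making rigorous that the safety-automaton state the shield advances using $\A'$'s \emph{predicted} outputs coincides, stepwise, with the state the \emph{real} output sequence of $w$ induces in $\SA^\varphi$. This identity is exactly what the abstraction hypothesis $\Label = \Label'$ (on all inputs where $\A$ is defined) delivers; without it the model and reality could desynchronize and the guarantee computed over $\A'$ would not carry to $\A$. A secondary care point is the sink state: I must confirm it is reachable only on inputs undefined in $\A'$ --- hence, by \cref{def:simulation}, undefined in $\A$ as well and therefore never seen on a real run --- which is precisely why the optimistic treatment of unexplored actions in \cref{subsec:sink_state} does not compromise safety on the abstracted part of the system.
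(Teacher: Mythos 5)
Your proposal is correct and follows essentially the same route as the paper's own (much terser) proof sketch: both reduce the claim to (i) the shield keeping the product game $\A' \mathrel{||} \SA^\varphi$ inside its safe/winning set, and (ii) the abstraction property of \cref{def:simulation} transferring that guarantee from $\A'$ to the true system $\A$. You merely unfold what the paper dismisses as ``well known'' --- the lockstep induction on game states, the output synchronization $\Label = \Label'$, and the observation that the sink state $\sinkLoc$ is unreachable along any genuine run of $\A$ --- all of which are sound and consistent with the paper's construction.
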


%\inlinetodo{Proof is not yet finished}
\begin{proof}[sketch]
Let $\playerStrategy$ be a \Cont{} strategy of $\A' \times \SA^\varphi$, and let $\playerStrategy'$ be a \Cont{} strategy of $\A'$ obtained by taking the obvious projection of $\sigma$.
It is well known that if a state $(\loc,\SAloc)$ of the product game is winning for \Cont{} under $\playerStrategy$, in $\A'$,
%by taking an obvious projection we obtain a player 1 strategy $\sigma'$ of $\A'$ such that
any path from $s$ satisfies $\varphi$ under $\sigma'$.
Since $\A'$ is an abstraction of $\A$, if any path from the initial state $s'_0$ of $\A'$ satisfies $\varphi$ under $\sigma'$,
any path from the initial state $s_0$ of $\A$ also satisfies $\varphi$ under $\sigma'$.
Therefore, for any \Env{} strategy $\envStrategy$, the MDP $\mdp^\envStrategy$ satisfies $\varphi$ under any strategy $\playerStrategy$ generated by $\Shield$.
 \qed{}
\end{proof}

%This conjecture does not look to be true. If the abstraction $\A'$ accepts an "inconsistent action of Player 1", and if it is a safe action in it, then the shield may falsely declare some preceding actions to be safe. To claim safety assurance, we would need to exclude such possibility somehow.
%
%Conversely, if $\A'$ accepts an "inconsistent action of Player 2", and if it leads the system into an unsafe state, then the shield may be unnecessarily restrict some safe actions (which is not a problem in a sense of safety assurance, though).
% \todo{Briefly explain the theoretical discussion}
% \tT{Th 11 says the shield generates a safe controller if it is constructed upon a ``correct'' model, so I guess we get what you want if we have a theorem that says ``automata learning eventually generates a correct model''?}
% \mw{Probably that is reasonable enough. Especially in RL, we usually have a maximum number N of steps per each episode, and we can get a correct model in this bound after $\Sigma^{N}$ steps (if we ignore the choice of \Env{}).}

Let $\A^{\envStrategy}$ be the MDP representing the actual system in \cref{fig:flowschema}, where $\A$ is an \ac{fsrs} and $\envStrategy$ is a \Env{} strategy in $\A$.
By \cref{theorem:validity}, if the learned \ac{fsrs} $\A'$ abstracts\LongVersion{ the \ac{fsrs}} $\A$, 
the dynamic shield $\Shield$ assures the safety of the exploration in $\A^{\envStrategy}$.
When the training data is large and includes a certain set of words called \emph{characteristic} set, the \LongVersion{\ac{rpni}}\ShortVersion{\acs{rpni}} algorithm is guaranteed to learn the abstraction correctly~\cite{Lopez2016}.
However, in our dynamic shielding scheme, the training data may not be a superset of the characteristic set even in the limit because 
the dynamic shield interferes with the exploration.
% prevents the exploration of potentially unsafe actions.
%
% \mw{This part can be extended, if we want}
Nevertheless, 
suppose the learning algorithm eventually explores all available actions, and the maximum length of each episode in RL is long enough to cover all states of $\A$. In that case,
the training data eventually includes the characteristic set of $\A$ restricted to the safe actions according to the dynamic shield $\Shield$.
Since the dynamic shield constructed from such training data prohibits all the unsafe actions% according to the shield constructed from $\A$
,
% prohibits all the actions prohibited by the shield constructed from $\A$,
our dynamic shielding assures the safety in the limit.

%%%%%%%%%%%%%%%%%%%%%%%%%%%%%%%%%%%%%%%%%%%%%%%%%%%%%%%%%%%%
%%%%%%%%%%%%%%%%%%%%%%%%%%%%%%%%%%%%%%%%%%%%%%%%%%%%%%%%%%%%
\section{Experimental evaluation}\label{section:experiments}
%%%%%%%%%%%%%%%%%%%%%%%%%%%%%%%%%%%%%%%%%%%%%%%%%%%%%%%%%%%%
%%%%%%%%%%%%%%%%%%%%%%%%%%%%%%%%%%%%%%%%%%%%%%%%%%%%%%%%%%%%

To show the applicability of dynamic shielding and the viability of our approach,
we conducted a series of experiments. The following research questions guided our experiments on our dynamic shielding scheme for RL.\@
%-----------------------------------------------------------
\begin{description}
 \item[RQ1] Does dynamic shielding reduce the number of undesired behaviors?
 \item[RQ2] How does dynamic shielding affect the quality of the controller synthesized by RL?
 \item[RQ3] What is the computational overhead of dynamic shielding, and is it prohibitively large?
\end{description}
%-----------------------------------------------------------

%%%%%%%%%%%%%%%%%%%%%%%%%%%%%%%%%%%%%%%%%%%%%%%%%%%%%%%%%%%%
% \subsection{Implementation}
%%%%%%%%%%%%%%%%%%%%%%%%%%%%%%%%%%%%%%%%%%%%%%%%%%%%%%%%%%%%

%%%%%%%%%%%%%%%%%%%%%%%%%%%%%%%%%%%%%%%%%%%%%%%%%%%%%%%%%%%%
\subsection{Implementation and experiments}
%%%%%%%%%%%%%%%%%%%%%%%%%%%%%%%%%%%%%%%%%%%%%%%%%%%%%%%%%%%%
We implemented our dynamic shielding scheme in Python 3 and Java using 
LearnLib~\cite{DBLP:conf/cav/IsbernerHS15} for the \LongVersion{\ac{rpni}}\ShortVersion{\acs{rpni}} algorithm\footnote{The artifact is publicly available at \url{https://doi.org/10.5281/zenodo.6906673}.}.
For deep RL, we used Stable Baselines 3~\cite{stable-baselines3} (for \ac{ppo}~\cite{DBLP:journals/corr/SchulmanWDRK17}) and Keras-RL~\cite{plappert2016kerasrl} (for \ac{dqn}\LongVersion{ with Boltzmann exploration}~\cite{DBLP:journals/corr/MnihKSGAWR13}).
We used two libraries and learning algorithms to demonstrate that\LongVersion{ our} dynamic shielding\LongVersion{ scheme} is independent of the RL algorithm.

To answer the research questions, we compared the performance of RL with dynamic shielding (denoted as \DynamicShielding{})
with the standard RL process without shielding (denoted as \NoShield{}) and
the RL process with safe padding~\cite{HasanbeigAK20} (denoted as \SafePadding{}).

Learning of each controller consists of the \emph{training} and the \emph{test} phases.
The training phase is the main part of the learning, and the testing phase is invoked once every 10,000 training steps to evaluate the learned controller and choose the resulting one.
% its performance \tT{How about being explicit about what performance is here, and use it as a specific term? For example \cref{figure:summary_reward_vs_crashes} was a bit tricky to understand for me. In Stable Baselines it is the average reward of 30 runs using the controller} 
% is the best.
We finish the learning when the total number of steps in the training phase exceeds the predetermined bound, which is one of the commonly used criteria.
See \cref{table:summary_benchmarks} for the bounds for each benchmark.

To evaluate the RL training, we measure
 the total number of training episodes with undesired behaviors and
 the total execution time, including both training and testing phases.
To evaluate each controller, we run it for 30 episodes and measure 
 the \emph{mean reward} and the \emph{safe rate}, \ie{}
 the rate of the episodes without undesired behaviors in the 30 episodes.

We ran experiments on a GPU server with AMD EPYC 7702P, NVIDIA GeForce RTX 2080 Ti, 125GiB RAM, and Ubuntu 20.04.3 LTS.\@
We used eight CPUs and one GPU for each execution.
We ran each instance of the experiment 30 times, \ie{} we trained $7 \times 3 \times 30$ controllers in total. 
For each metric, we report the mean of the 30 executions, \ie{} we have $7 \times 3$ reported values.
%
%\cref{table:summary_of_result} summarizes the experiment result.

%%%%%%%%%%%%%%%%%%%%%%%%%%%%%%%%%%%%%%%%%%%%%%%%%%%%%%%%%%%%
\subsection{Benchmarks}\label{subsec:benchmarks}
%%%%%%%%%%%%%%%%%%%%%%%%%%%%%%%%%%%%%%%%%%%%%%%%%%%%%%%%%%%%

%-----------------------------------------------------------
\begin{table}[tbp]
 \caption{Summary of the benchmarks we used. MLP and CNN are abbreviations of ``multilayer perceptron'' and ``convolutional neural network''.}%
 \label{table:summary_benchmarks}
 \centering
 \scriptsize
 \begin{tabular}{l c c c c r}
  \toprule
  & Benchmark's origin &Observation space (size) & Network & Learning algorithm & \# of steps\\
  \midrule
  \WaterTank{}& Alshiekh et al.~\cite{AlshiekhBEKNT18} & Discrete (714) & MLP & PPO & 500,000\\
  \GridWorld{}& Our original & Discrete (625) & MLP & PPO & 100,000 \\
  \Taxi{}& OpenAI Gym~\cite{BrockmanCPSSTZ16}& Discrete (500) & MLP & PPO & 200,000\\
  \CliffWalking{}& OpenAI Gym~\cite{BrockmanCPSSTZ16}& Discrete (48) & MLP & PPO & 200,000\\
  \SelfDrivingCar{}& Alshiekh et al.~\cite{AlshiekhBEKNT18} & Continuous (${[-1,1]}^4$) & MLP & DQN\LongVersion{ with Boltzmann exploration} & 200,000\\
  \SideWalk{}& MiniWorld~\cite{gym_miniworld} & Image ($80 \times 60 \times 3 \times 256$) & CNN & PPO & 100,000\\
  \CarRacing{}& OpenAI Gym~\cite{BrockmanCPSSTZ16}& Image ($96 \times 96 \times 3 \times 256$) & CNN & PPO & 200,000\\
  \bottomrule
 \end{tabular}
\end{table}
%-----------------------------------------------------------
We chose seven benchmarks for our experiments.
\cref{table:summary_benchmarks} summarizes them.
Most of them are common and openly available.
We modified them to fit to dynamic shielding: randomness except for those observable as \Env{} actions are removed; the actions are discretized; the observations for the RL agent is not changed, while the observation for the dynamic shield is discretized.
We used CNN for the benchmarks with graphical observation and MLP for the others.
% We mainly used the \ac{ppo}~\cite{DBLP:journals/corr/SchulmanWDRK17} for training.
% For \SelfDrivingCar{}, we used \ac{dqn}~\cite{DBLP:journals/corr/MnihKSGAWR13} with Boltzmann exploration to demonstrate that our dynamic shielding scheme is independent of the learning algorithm.

% In our experiments, we used the following five benchmarks.
% \GridWorld{}, \Taxi{}, \CliffWalking{}, and \SelfDrivingCar{} are simple benchmarks with text-based inputs.
% \SideWalk{} and \CarRacing{} are more realistic benchmarks with graphical inputs.
% See \ShortVersion{the supplemental material}\LongVersion{\cref{appendix:detail_of_learning}} for the detail.

% \todo{To Ezequiel and Sasinee: Please fill/fix the following descriptions and \cref{table:summary_benchmarks}.}
\WaterTank{} implements the benchmark already used for shielding in~\cite{AlshiekhBEKNT18}.
% \WaterTank{} is a benchmark from~\cite{AlshiekhBEKNT18}.
\cref{example:water_tank:summary,example:water_tank:fsrs} show the details of the system.
The undesired behavior used in the shield construction is:
%-----------------------------------------------------------
\begin{ienumeration}
 \item to make the water tank empty or full, or
 \item to change the status of the switch too often.
\end{ienumeration}
%-----------------------------------------------------------

\GridWorld{} is a high-level robot control example of two robots in a $5 \times 5$ grid arena.
One of them is the ego robot we control, and the other robot randomly moves.
% The undesired behavior used in the shield construction is to hit either the wall or the other robot.
The objective of the ego robot is to reach the goal area, without touching the arena walls or the other robot.

% \Taxi{} is a variant of an OpenAI Gym benchmark~\cite{BrockmanCPSSTZ16}.
\Taxi{} is a benchmark to pick up passengers and drop them off at another place in a $5 \times 5$ grid arena.
As criteria of a safety violation, we measured the number of episodes where the taxi is broken, which randomly occurs when it hits the wall.
The undesired behavior used in the shield construction is hitting the wall, picking up or dropping off the passenger at an inappropriate location.

% \CliffWalking{} is an OpenAI Gym benchmark~\cite{BrockmanCPSSTZ16}.
\CliffWalking{} is a benchmark to move ego to the goal area without stepping off the cliff in a $3 \times 12$ grid arena.
The undesired behavior used in the shield construction is to step off the cliff.

\SelfDrivingCar{} is a benchmark to drive a car in a $480 \times 480$ arena around a blocked area in a clockwise direction without hitting the walls.
The undesired behavior used in the shield construction is to hit the walls in the arena.

\SideWalk{} is a benchmark for 3D robot simulation.
Its objective is to reach the goal area on the sidewalk without entering the roadway.
The undesired behavior used in the shield construction is to enter the roadway.

% is a variant of a benchmark in OpenAI Gym~\cite{BrockmanCPSSTZ16}.
\CarRacing{} is a benchmark to drive a car on a predetermined racing course.
\LongVersion{The physical dynamics is simulated by Box2D, which is a 2D physics engine for games.}
As criteria of a safety violation, we measured the number of episodes with \emph{spin behavior}, \ie{} ego drives off the road and keeps rotating indefinitely.
The undesired behavior used in the shield construction is to deviate from the road for any consecutive steps, which is not yet a safety violation but tends to lead to it.
% The summary of the modifications from the original benchmark are as follows: the input space is discretized; the course creation is determinized; the course curves are smoother.
% The safety specification is to not deviate from the road for any consecutive steps.
% As criteria of safety violation, we measured the number of episodes with \emph{spin behavior}, \ie{} where ego drives off the road and keeps rotating indefinitely.

%-----------------------------------------------------------
% \reviewer{2}{consider changing the order of the columns in the table to have the results of each metric side by side}
\begin{table}[tbp]
 \caption{Mean of 1) the number of episodes with undesired behavior in training phases and 2) the execution time (in seconds), including both training and testing phases. The cells with the best results are highlighted.}%  chktex 9 chktex 10
 \label{table:summary_training_episodes}
 \centering
 \scriptsize
 % \begin{tabular}{lrrrrrrccccccccccc}
 %  \toprule
 %  {} & \multicolumn{2}{c}{\NoShield{}} & \multicolumn{2}{c}{\SafePadding{}} & \multicolumn{2}{c}{\DynamicShielding{}} \\
 %  {} & Undes.\ episodes & Total time & Undes.\ episodes & Total time & Undes.\ episodes & Total time \\
 %  \midrule
 %  \WaterTank{} & 1883.67 & \tbcolor{} 1860.46 & 1892.40 & 1947.09 & \tbcolor{} 177.13 & 6080.89 \\
 %  \GridWorld{} & 6996.40 & \tbcolor{} 177.18 & 7322.23 & 1487.10 & \tbcolor{} 5623.43 & 4548.70 \\
 %  \CliffWalking{} & 1493.20 & \tbcolor{} 355.18 & 1528.67 & 365.54 & \tbcolor{} 478.20 & 839.06 \\
 %  \Taxi{} & 8723.13 & \tbcolor{} 336.04 & 2057.33 & 349.78 & \tbcolor{} 37.77 & 611.87 \\
 %  \SelfDrivingCar{} & 6403.07 & \tbcolor{} 865.55 & 6454.60 & 4919.13 & \tbcolor{} 5662.40 & 10087.18 \\
 %  \SideWalk{} & 373.60 & \tbcolor{} 762.67 & 427.93 & 1734.66 & \tbcolor{} 273.37 & 6395.73 \\
 %  \CarRacing{} & 180.13 & \tbcolor{} 7650.38 & 141.17 & 16694.63 & \tbcolor{} 41.73 & 12532.04 \\
 %  \bottomrule
 % \end{tabular}
 \begin{tabular}{lrrrrrrccccccccccc}
  \toprule
  {} & \multicolumn{3}{c}{Undesired episodes} & \multicolumn{3}{c}{Total time} & \\
  {} & \multicolumn{1}{c}{\NoShield{}} & \multicolumn{1}{c}{\SafePadding{}} & \multicolumn{1}{c}{\DynamicShielding{}} & \multicolumn{1}{c}{\NoShield{}} & \multicolumn{1}{c}{\SafePadding{}} & \multicolumn{1}{c}{\DynamicShielding{}} \\
  \midrule
  \WaterTank{} & 1883.67 & 1892.40 & \tbcolor{} 177.13 & \tbcolor{} 1860.46 & 1947.09 & 6080.89 \\
  \GridWorld{} & 6996.40 & 7322.23 & \tbcolor{} 5623.43 & \tbcolor{} 177.18 & 1487.10 & 4548.70 \\
  \CliffWalking{} & 1493.20 & 1528.67 & \tbcolor{} 478.20 & \tbcolor{} 355.18 & 365.54 & 839.06 \\
  \Taxi{} & 8723.13 & 2057.33 & \tbcolor{} 37.77 & \tbcolor{} 336.04 & 349.78 & 611.87 \\
  \SelfDrivingCar{} & 6403.07 & 6454.60 & \tbcolor{} 5662.40 & \tbcolor{} 865.55 & 4919.13 & 10087.18 \\
  \SideWalk{} & 373.60 & 427.93 & \tbcolor{} 273.37 & \tbcolor{} 762.67 & 1734.66 & 6395.73 \\
  \CarRacing{} & 180.13 & 141.17 & \tbcolor{} 41.73 & \tbcolor{} 7650.38 & 16694.63 & 12532.04 \\
  \bottomrule
 \end{tabular}
\end{table}
%-----------------------------------------------------------

%-----------------------------------------------------------
\begin{figure}[tbp]
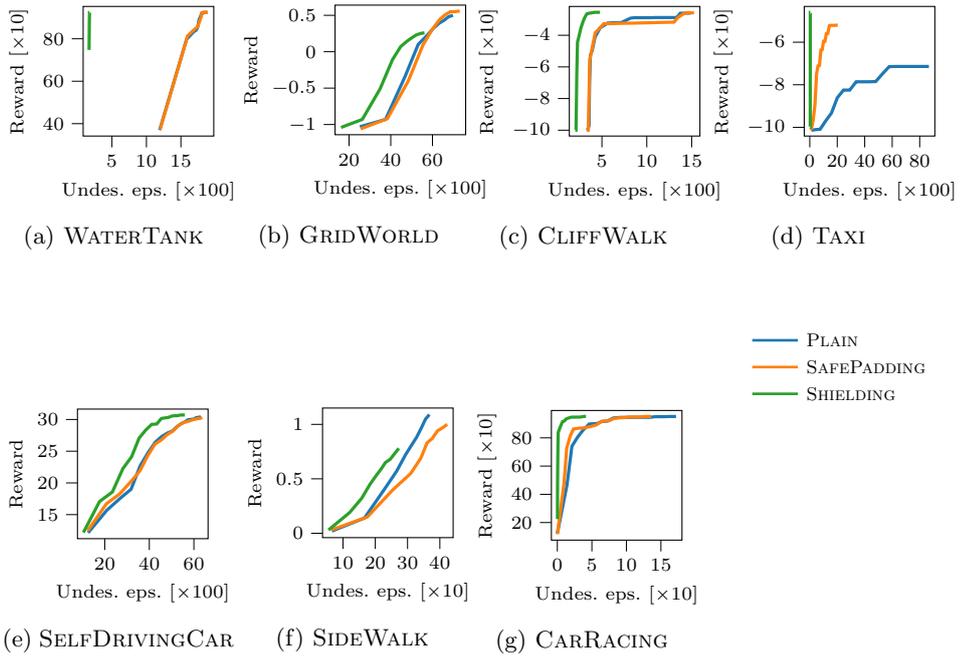

 \scriptsize
 \begin{subfigure}{.24\textwidth}
  \centering
  \input{./figs/crash_episodes_eval-best_mean_reward_Benchmarks.WATER_TANK.tikz}%
  \caption{\WaterTank{}}
 \end{subfigure}
 \hfill
 \begin{subfigure}{.24\textwidth}
  \centering
  \input{./figs/crash_episodes_eval-best_mean_reward_Benchmarks.GRID_WORLD.tikz}%
  \caption{\GridWorld{}}
 \end{subfigure}
 \hfill
 \begin{subfigure}{.24\textwidth}
  \centering
  \input{./figs/crash_episodes_eval-best_mean_reward_Benchmarks.CLIFFWALKING.tikz}%
  \caption{\CliffWalking{}}
 \end{subfigure}
 \hfill
 \begin{subfigure}{.24\textwidth}
  \centering
  \input{./figs/crash_episodes_eval-best_mean_reward_Benchmarks.TAXI.tikz}%
  \caption{\Taxi{}}
 \end{subfigure}
 \hfill
 \begin{subfigure}{.25\textwidth}
  \centering
  \input{./figs/crash_episodes_eval-best_mean_reward_Benchmarks.SELF_DRIVING_CAR.tikz}%
  \caption{\SelfDrivingCar{}}
 \end{subfigure}
 \hfill
 \begin{subfigure}{.24\textwidth}
  \centering
  \input{./figs/crash_episodes_eval-best_mean_reward_Benchmarks.SIDEWALK.tikz}%
  \caption{\SideWalk{}}
 \end{subfigure}
 \hfill
 \begin{subfigure}{.24\textwidth}
  \centering
  \input{./figs/crash_episodes_eval-best_mean_reward_Benchmarks.CAR_RACING.tikz}%
  \caption{\CarRacing{}}
 \end{subfigure}
 \begin{minipage}[t]{.03\textwidth} 
  \hfill
 \end{minipage}
 \begin{minipage}[t]{.21\textwidth}
  \centering
  \input{./figs/legend}%
 \end{minipage}
 \caption{The mean of the number of undesired episodes before each testing phase and the reward of the best controller obtained in the testing phase.}%
 \label{figure:summary_reward_vs_crashes}
\end{figure}
%-----------------------------------------------------------

%%%%%%%%%%%%%%%%%%%%%%%%%%%%%%%%%%%%%%%%%%%%%%%%%%%%%%%%%%%%
\subsection{RQ1: Safety by dynamic shielding in the training phase}\label{subsection:safety}
%%%%%%%%%%%%%%%%%%%%%%%%%%%%%%%%%%%%%%%%%%%%%%%%%%%%%%%%%%%%

To evaluate the safety by dynamic shielding, we measured the number of training episodes with undesired behaviors (``Undesired episodes'' columns in \cref{table:summary_training_episodes}), and the relationship between the number of undesired training episodes before each testing phase and the highest mean reward by the testing phase (\cref{figure:summary_reward_vs_crashes}).
% In order to evaluate the safety by dynamic shielding, we measured the total number of the episodes with undesired behaviors in the training phase (\cref{table:summary_training_episodes}), and 
% the relationship between the number of such undesired episodes before each testing phase and the best performance achieved by the testing phase (\cref{figure:summary_reward_vs_crashes}). % \tT{"the number of undesired episodes and the best performance achieved in the testing phase"? assuming we define the term "performance"}
% As the performance, we use the mean of the rewards (\emph{mean reward}).
% \tT{Is it ok in \cref{figure:summary_reward_vs_crashes} to connect dots in this way? Would it be better to draw it in a "stairs" shape?}
% \mw{I hope the usual linear graph is at least OK, but we can discuss it later.}
% \cref{table:summary_training_episodes,figure:summary_reward_vs_crashes} show the mean of 30 executions. \tT{how about cutting this and  saying this at section 4.2? We write this in every RQ}\mw{Indeed. Thanks!}

In \cref{table:summary_training_episodes}, we observe that, for all benchmarks, the training episodes with undesired behavior were, on average, the lowest when we used dynamic shielding.
% (/ (- 180.13 41.73) 180.13)0.7683339810137124
% (/ (- 141.17 41.73) 141.17)0.7043989516186159
For example, for \CarRacing{}, the mean number of the training episodes with undesired behavior was reduced by about 77\% compared to \NoShield{} and about 70\% compared to \SafePadding{}.
In \cref{figure:summary_reward_vs_crashes}, we observe that for all benchmarks, the curve of \DynamicShielding{} is growing faster than the curves of \NoShield{} and \SafePadding{}.
This suggests that dynamic shielding decreased the number of undesired explorations to obtain a controller with similar performance.
% The statistical hypothesis test also confirms the reduction of such undesired training episodes by \DynamicShielding{} for most of the benchmarks (\cref{table:statistical_hypothesis_test}).

\rqanswer{RQ1}{Overall, we conclude that dynamic shielding can significantly reduce undesired behaviors during exploration.}

Compared to \NoShield{}, \DynamicShielding{} decreases such undesired training episodes because it prevents exploration of actions known to cause undesired behavior, while the plain RL may repeatedly explore such actions.
Although \SafePadding{} also prevents such an undesired exploration, the number of the undesired training episodes of \DynamicShielding{} was smaller than that of \SafePadding{}. 
This is because, with \DynamicShielding{}, the undesired explorations are generalized by state merging in the RPNI algorithm, while in \SafePadding{}, the state space of the learned MDP is identical to the observation space in the RL, and the undesired explorations are not often generalized.
\begin{LongVersionBlock}
Another reason is that in \SafePadding{}, a state is deemed to be safe only if the agent can be in the safe states after taking any action.
 For example, a state is deemed to be unsafe if there is a bad action causing an undesired behavior even if the other actions are totally harmless.
This definition of safe states is too pessimistic, and in many cases, \SafePadding{} cannot provide any safe actions, in which case the RL agent explores in the same way as \NoShield{}.
\end{LongVersionBlock}

%%%%%%%%%%%%%%%%%%%%%%%%%%%%%%%%%%%%%%%%%%%%%%%%%%%%%%%%%%%%
\subsection{RQ2: Performance of the resulting controller}
%%%%%%%%%%%%%%%%%%%%%%%%%%%%%%%%%%%%%%%%%%%%%%%%%%%%%%%%%%%%

To evaluate the performance of the resulting controller, for each instance of the experiments, we measured the mean reward and the safe rate using the controller that achieved the highest mean reward in the testing phases (\cref{table:summary_of_result_performance}).

In \cref{table:summary_of_result_performance}, we observe that dynamic shielding does not significantly decrease the safe rate for most of the benchmarks.
Moreover, the reward was the highest when we used dynamic shielding for most of the benchmarks.
This is likely because dynamic shielding prevents explorations with undesired behaviors and leads the learning agent to achieve the task.
% , which usually improves the quality of the controller.

In \cref{table:summary_of_result_performance}, we also observe that for \GridWorld{} and \SideWalk{}, the performance of the controllers obtained by \DynamicShielding{} was the worst.
This is likely because when a dynamic shield prevents undesired explorations, some of the useful explorations may also be prevented if they are deemed undesired when generalizing undesired behavior.
Note that such generalization is useful since it can prevent undesired explorations even if we have not experienced exactly the same exploration.
Nevertheless, \cref{table:summary_of_result_performance} also shows that the average degradation of the safe rate due to dynamic shielding is at most 12\% (\GridWorld{}).
This is not prohibitively large considering the reduction of undesired explorations (\eg{} from 6996.40 to 5623.43 for \GridWorld{} in \cref{table:summary_of_result_performance}).
% Moreover, \cref{table:statistical_hypothesis_test} shows that such degradation due to the use of dynamic shielding is often not statistically significant.
%

% \reviewer{1}{Reviewer does not agree with the ``moderate degradation'' in the two observed cases: the mean rewards go from 0.93 to 0.67 and from 0.37 to 0.07, respectively.}
\rqanswer{RQ2}{Overall, we conclude that dynamic shielding usually improves the performance of the resulting controller.% , and even if it harms, the performance degradation is moderate
}

%-----------------------------------------------------------
\begin{table}[tbp]
 % \reviewer{2}{consider changing the order of the columns in the table to have the results of each metric side by side}
 \caption{Mean of the performance of the best controllers obtained in the 30 executions. The cells with the best results are highlighted.}%
 \label{table:summary_of_result_performance}
 \centering
 \scriptsize
 % \begin{tabular}{lcccccccccccc}
 %  \toprule
 %  {} & \multicolumn{2}{c}{\NoShield{}} & \multicolumn{2}{c}{\SafePadding{}} & \multicolumn{2}{c}{\DynamicShielding{}} \\
 %  {} & Mean reward & Safe rate & Mean reward & Safe rate & Mean reward & Safe rate \\
 %  \midrule
 %  \WaterTank{} & 918.89 & \tbcolor{} 1.00 & 919.81 & \tbcolor{} 1.00 & \tbcolor{} 921.81 & \tbcolor{} 1.00 \\
 %  \GridWorld{} & 0.37 & 0.80 & \tbcolor{} 0.46 & \tbcolor{} 0.85 & 0.07 & 0.73 \\
 %  \CliffWalking{} & -69.13 & \tbcolor{} 1.00 & -66.00 & \tbcolor{} 1.00 & \tbcolor{} -65.93 & \tbcolor{} 1.00 \\
 %  \Taxi{} & -147.61 & 0.57 & -139.62 & 0.67 & \tbcolor{} -92.93 & \tbcolor{} 1.00 \\
 %  \SelfDrivingCar{} & 28.83 & \tbcolor{} 1.00 & 28.86 & \tbcolor{} 1.00 & \tbcolor{} 29.81 & \tbcolor{} 1.00 \\
 %  \SideWalk{} & \tbcolor{} 0.93 & \tbcolor{} 0.93 & 0.90 & 0.89 & 0.67 & 0.89 \\
 %  \CarRacing{} & 375.53 & \tbcolor{} 1.00 & 509.25 & \tbcolor{} 1.00 & \tbcolor{} 622.07 & \tbcolor{} 1.00 \\
 %  \bottomrule
 % \end{tabular}
  \begin{tabular}{lcccccccccccc}
  \toprule
  {} & \multicolumn{3}{c}{Mean reward} & \multicolumn{3}{c}{Safe rate} \\
  {} & \multicolumn{1}{c}{\NoShield{}} & \multicolumn{1}{c}{\SafePadding{}} & \multicolumn{1}{c}{\DynamicShielding{}} & \multicolumn{1}{c}{\NoShield{}} & \multicolumn{1}{c}{\SafePadding{}} & \multicolumn{1}{c}{\DynamicShielding{}} \\
  \midrule
  \WaterTank{} & 918.89 & 919.81 & \tbcolor{} 921.81 & \tbcolor{} 1.00 & \tbcolor{} 1.00 & \tbcolor{} 1.00 \\
  \GridWorld{} & 0.37 & \tbcolor{} 0.46 & 0.07 & 0.80 & \tbcolor{} 0.85 & 0.73 \\
  \CliffWalking{} & -69.13 & -66.00 & \tbcolor{} -65.93 & \tbcolor{} 1.00 & \tbcolor{} 1.00 & \tbcolor{} 1.00 \\
  \Taxi{} & -147.61 & -139.62 & \tbcolor{} -92.93 & 0.57 & 0.67 & \tbcolor{} 1.00 \\
  \SelfDrivingCar{} & 28.83 & 28.86 & \tbcolor{} 29.81 & \tbcolor{} 1.00 & \tbcolor{} 1.00 & \tbcolor{} 1.00 \\
  \SideWalk{} & \tbcolor{} 0.93 & 0.90 & 0.67 & \tbcolor{} 0.93 & 0.89 & 0.89 \\
  \CarRacing{} & 375.53 & 509.25 & \tbcolor{} 622.07 & \tbcolor{} 1.00 & \tbcolor{} 1.00 & \tbcolor{} 1.00 \\
  \bottomrule
 \end{tabular}
\end{table}
%-----------------------------------------------------------

%%%%%%%%%%%%%%%%%%%%%%%%%%%%%%%%%%%%%%%%%%%%%%%%%%%%%%%%%%%%
\subsection{RQ3: Time efficiency of dynamic shielding}\label{subsection:time_efficiency}
%%%%%%%%%%%%%%%%%%%%%%%%%%%%%%%%%%%%%%%%%%%%%%%%%%%%%%%%%%%%

% %-----------------------------------------------------------
% \begin{table*}[tbp]
%  \centering
%  \caption{The difference of the mean execution time (in seconds) out of 30 trials.}%
%  \label{table:difference_execution_time_shield}
%  \scriptsize
%  \begin{tabular}{lcc}
%   \toprule
%   {} & $\text{\DynamicShielding{}} - \text{\NoShield{}}$ & $(\text{\DynamicShielding{}} - \text{\NoShield{}}) / \text{\NoShield{}}$ \\
%   \midrule
% \WaterTank & 4383.48 & 2.42 \\
% \GridWorld & 3906.18 & 21.28 \\
% \CliffWalking & 551.66 & 1.55 \\
% \Taxi & 1596.04 & 4.66 \\
% \SelfDrivingCar & 9221.62 & 10.65 \\
% \SideWalk & 3876.41 & 5.03 \\
% \CarRacing & 6064.21 & 0.83 \\
%   \bottomrule
%  \end{tabular}
% \end{table*}
% %-----------------------------------------------------------

\begin{comment}
 %-----------------------------------------------------------
 \begin{table*}[tbp]
 \centering
 \caption{The difference of the mean execution time (in seconds) out of 30 trials.}%
 \label{table:difference_execution_time}
 \scriptsize
 \begin{tabular}{lccc}
  \toprule
  {} & $\text{\DynamicShielding{}} - \text{\NoShield{}}$ & $\text{\SafePadding{}} - \text{\NoShield{}}$ & \NoShield{}\\
  \midrule
 \WaterTank & 4383.48 & 113.64 & 1813.63 \\
 \GridWorld & 3906.18 & 1382.65 & 183.59 \\
 \CliffWalking & 551.66 & 9.98 & 356.64 \\
 \Taxi & 1596.04 & 7.03 & 342.28 \\
 \SelfDrivingCar & 9221.62 & 4053.57 & 865.55 \\
 \SideWalk & 3876.41 & 1007.64 & 771.42 \\
 \CarRacing & 6064.21 & 8981.44 & 7314.02 \\
  \bottomrule
 \end{tabular}
 \end{table*}
 %-----------------------------------------------------------
\end{comment}
To evaluate the overhead of dynamic shielding, we measured the execution time, including both training and testing phases
(``Total time'' columns in
\cref{table:summary_training_episodes}).

In \cref{table:summary_training_episodes}, we observe that \NoShield{} was the fastest on average for all benchmarks.
This is mainly because of the computation cost to construct the set of safe actions.
We also observe that \SafePadding{} is usually faster than \DynamicShielding{}.
This is due to the combinatorial exploration in the RPNI algorithm to choose the merged states,
while \SafePadding{} does not conduct such state merging and  directly uses the observation space in the RL as the state space of the learned MDP.\@
We again remark that the generalization by the state merging contributes to the safety of \DynamicShielding{}.

% SelfDrivingCar: (/ (- 10087.18 865.55) 60)153.69383333333334
% CarRacing: (/ (- 12532.04 7650.38) 60)81.36100000000002
In \cref{table:summary_training_episodes}, we also observe that the overhead of \DynamicShielding{} is at most about 2.5 hours (153.7 minutes in \SelfDrivingCar{}).
This is not negligible but still acceptable for many common usage scenarios, \eg{} learning a controller during non-working hours.
Moreover, \cref{table:summary_training_episodes} also shows that the overhead is not very sensitive to the cost of each execution
% complexity of the environment
 and the observation space.
For example, the overhead of \DynamicShielding{} for \WaterTank{} was close to that for \CarRacing{}.
This indeed makes the relative overhead for \CarRacing{} small.

\rqanswer{RQ3}{Overall, we conclude that the overhead of dynamic shielding is not prohibitively large, especially when the RL is time-consuming even without shielding.}

% In \cref{table:summary_training_episodes}, we
% % \cref{table:difference_execution_time} summarizes the overhead of \DynamicShielding{} and \SafePadding{} compared to \NoShield{}.
% In \cref{table:difference_execution_time}, we observe that the complexity of the observation space does not increase the overhead of dynamic shielding.
% For example, the overhead of \DynamicShielding{} for \SelfDrivingCar{} (whose observation space is discrete) was larger than that for \SideWalk{} and \CarRacing{} (whose observation space is images).

% In \cref{table:difference_execution_time}, we also observe that the total overhead of \DynamicShielding{} is less than three hours.
% Moreover, since the overhead of \DynamicShielding{} is independent of the execution time of reinforcement learning without shielding, when the reinforcement learning is time-consuming even without shielding, the relative overhead of dynamic shielding becomes smaller.
% Overall, we conclude that the overhead of dynamic shielding is not prohibitively large especially when the reinforcement learning is time-consuming even without shielding.

%%%%%%%%%%%%%%%%%%%%%%%%%%%%%%%%%%%%%%%%%%%%%%%%%%%%%%%%%%%%
%%%%%%%%%%%%%%%%%%%%%%%%%%%%%%%%%%%%%%%%%%%%%%%%%%%%%%%%%%%%
\section{Conclusions and perspectives}\label{section:conclusion}
%%%%%%%%%%%%%%%%%%%%%%%%%%%%%%%%%%%%%%%%%%%%%%%%%%%%%%%%%%%%
%%%%%%%%%%%%%%%%%%%%%%%%%%%%%%%%%%%%%%%%%%%%%%%%%%%%%%%%%%%%

Based on passive automata learning, we proposed a new shielding scheme called \emph{dynamic shielding} for RL.\@
Since dynamic shielding does not require a predetermined system model, it is applicable to black-box systems.
The experiment results suggest that 
\begin{ienumeration}
 \item dynamic shielding prevents undesired exploration during training,
 \item dynamic shielding often improves the quality of the resulting controller, and
 \item the overhead of dynamic shielding is not prohibitively large.
\end{ienumeration}

In dynamic shielding, we assume that the system behaves deterministically once the actions of the controller and the environment are fixed.
Extending our approach, for example, utilizing a probabilistic model identification~\cite{DBLP:journals/corr/abs-1212-3873} to support probabilistic systems is a future direction.

\subsubsection*{Acknowledgements}

This work is partially supported by JST ERATO HASUO Metamathematics for Systems Design Project (No.\ JPMJER1603).
Masaki Waga is also supported by JST ACT-X Grant No.\ JPMJAX200U.
Stefan Klikovits is also supported by JSPS Grant-in-Aid\LongVersion{ for Research Activity Start-up} No.\ 20K23334.
Sasinee Pruekprasert is also supported by JSPS Grant-in-Aid\LongVersion{ for Early-Career Scientists} No.\ 21K14191.
Toru Takisaka is also supported by NSFC Research Fund for International Young Scientists No. 62150410437.

\ifdefined\VersionForArXiV%
	\bibliographystyle{alpha}
	\newcommand{\CCIS}{Communications in Computer and Information Science}
	\newcommand{\ENTCS}{Electronic Notes in Theoretical Computer Science}
	\newcommand{\FMSD}{Formal Methods in System Design}
	\newcommand{\IJFCS}{International Journal of Foundations of Computer Science}
	\newcommand{\IJSSE}{International Journal of Secure Software Engineering}
	\newcommand{\JLAP}{Journal of Logic and Algebraic Programming}
	\newcommand{\JLC}{Journal of Logic and Computation}
	\newcommand{\LMCS}{Logical Methods in Computer Science}
	\newcommand{\LNCS}{Lecture Notes in Computer Science}
	\newcommand{\RESS}{Reliability Engineering \& System Safety}
	\newcommand{\STTT}{International Journal on Software Tools for Technology Transfer}
	\newcommand{\TCS}{Theoretical Computer Science}
	\newcommand{\ToPNoC}{Transactions on Petri Nets and Other Models of Concurrency}
	\newcommand{\TSE}{IEEE Transactions on Software Engineering}
        \newcommand{\FGCS}{Future Generation Computing Systems}
        \newcommand{\SCP}{Science of Computer Programming}
        \newcommand{\IFAAMS}{International Foundation for Autonomous Agents and Multiagent Systems}
	\renewcommand*{\bibfont}{\footnotesize}
	\printbibliography[title={References}]
\else
	\newcommand{\CCIS}{CCIS}
	\newcommand{\ENTCS}{ENTCS}
	\newcommand{\FMSD}{FMSD}
	\newcommand{\IJFCS}{IJFCS}
	\newcommand{\IJSSE}{IJSSE}
	\newcommand{\JLAP}{JLAP}
	\newcommand{\JLC}{JLC}
	\newcommand{\LMCS}{LMCS}
	\newcommand{\LNCS}{LNCS}
	\newcommand{\RESS}{RESS}
	\newcommand{\STTT}{STTT}
	\newcommand{\TCS}{TCS}
	\newcommand{\ToPNoC}{ToPNoC}
	\newcommand{\TSE}{TSE}
        \newcommand{\FGCS}{Future Gener. Comput. Syst.}
          \newcommand{\SCP}{Science of Computer Programming}
        \newcommand{\IFAAMS}{IFAAMS}
        \bibliographystyle{splncs04}
\fi
\bibliography{AutomLearnShielding}

\ifdefined\VersionLong%
\clearpage
\appendix
% \sk{Check the position of the figures and texts}
%%%%%%%%%%%%%%%%%%%%%%%%%%%%%%%%%%%%%%%%%%%%%%%%%%%%%%%%%%%%
\section{Post-posed shielding with online automata inference}\label{section:postposed_shielding}
%%%%%%%%%%%%%%%%%%%%%%%%%%%%%%%%%%%%%%%%%%%%%%%%%%%%%%%%%%%%

Here, we show how to apply our dynamic shielding scheme for \emph{post-posed} shielding in~\cite[\S{}5.2]{AlshiekhBEKNT18}.
A post-posed shield is similar to a (preemptive) shield in \cref{def:preemptive_shield}, but instead of returning the set of safe actions of \Cont{},
a post-posed shield modifies an unsafe action of \Cont{} to enforce the given specification.

%-----------------------------------------------------------
\begin{definition}[post-posed shield]
 Let  $\A = (\Loc, \locinit, \ActionsIn, \ActionsOut, \Edges, \Label)$ be an \ac{fsrs}.\@
 A \emph{post-posed shield} for $\A$ is a Mealy machine $\Shield = (\ShieldLoc, \shieldLocInit, \ActionsIn, \ActionsCtrl, \ShieldEdges, \ShieldLabel)$ \st{}
 for any $\shieldLoc \in \ShieldLoc$, \Cont{} action $\action^1 \in \ActionsCtrl$, and \Env{} actions $\actionIn^2, \actionIn'^2 \in \ActionsEnv$,
 we have $\ShieldLabel(\shieldLoc, (\action^1, \actionIn^2)) = \ShieldLabel(\shieldLoc, (\action^1, \actionIn'^2))$.
\end{definition}
%-----------------------------------------------------------

For any input word $\wordIn \in \ActionsIn^*$, a post-posed shield $\Shield$ returns $\ShieldLabel(\wordIn)$ as a safe action for \Cont{} after $\wordIn$ is processed in $\A$.
A shield $\Shield$ canonically induces a strategy $\playerStrategy$ for an \ac{mdp} $\mdp^\envStrategy$ generated by any $\envStrategy$, namely a strategy $\playerStrategy$ such that 
$\playerStrategy(\hat\loc_0, a_1, \hat\loc_1, \ldots, a_k, \hat\loc_k)$ is the Dirac distribution such that $\playerStrategy(\hat\loc_0, a_1, \hat\loc_1, \ldots, a_k, \hat\loc_k)(\Label_\mathcal{S}(\wordIn)) = 1$.

The construction of a post-posed shield is almost the same as that of a preemptive shield.
We obtain a post-posed shield by taking one of the actions returned by a preemptive shield.
We note that to minimize the interference, it is preferred to have $\ShieldLabel(\shieldLoc, (\actionIn^1, \actionIn^2)) = \actionIn^1$ if $\actionIn^1$ is a safe action.

Since constructing a post-posed shield does not require any other information than that of a preemptive shield,
it is straightforward to construct a post-posed shield with online automata inference.

\section{Detail of the RPNI algorithm}\label{appendix:RPNI_detail}

The following shows the formal definition of \acp{ptmm}.

%------------------------------------------------------------
\begin{definition}
	[\Acf{ptmm}]\label{def:ptmm}
	For a finite training data $\trainingData \subseteq \ActionsIn^* \times \ActionsOut$,
	the \ac{ptmm} is the Mealy machine $\A = (\Loc, \locinit, \ActionsIn, \ActionsOut, \Edges, \Label)$ such that:
	\begin{myitemize}
		\item \LongVersion{$\Loc$ consists of the set of the input words in $\trainingData$, \ie{}} $\Loc = \{\word \mid \exists \actionOut \in \ActionsOut.\, (\word, \actionOut) \in \trainingData \}$;
		\item \LongVersion{the initial state $\locinit$ is} $\locinit = \varepsilon$;
		\item \LongVersion{the transition function} $\Edges$ and \LongVersion{the labeling function} $\Label$ are such that for each $\loc \in \Loc$ and $\actionPair \in \ActionsPair$, if $\loc \cdot \actionIn \in \Loc$ holds, $\Delta(\loc, \actionIn) = \loc \cdot \actionIn$ and $\Label(\loc,\actionIn) = \actionOut$, and undefined otherwise.
	\end{myitemize}
\end{definition}
%------------------------------------------------------------

%-----------------------------------------------------------
\begin{algorithm}[bp]
	\caption{Outline of the RPNI algorithm}%
	\label{algorithm:RPNI}
	\DontPrintSemicolon{}
	\newcommand{\myCommentFont}[1]{\texttt{\footnotesize{#1}}}
	\SetCommentSty{myCommentFont}
	\SetKwFunction{FMerge}{merge}
	% \scriptsize
	\Input{Training data $\trainingData \subseteq \ActionsIn^* \times \ActionsOut$}
	\Output{A Mealy machine $\A$ compatible with the training data $\trainingData$}
	$\A \gets \A_{\trainingData}$ \tcp*[r]{Initialize $\A$ by the \ac{ptmm} of $\trainingData$}\label{algorithm:RPNI:initializeA}
	$\red \gets \emptyset$; $\blue \gets \Loc$\;%  \tcp*[r]{Let $\blue$ be the states of $\A_{\trainingData}$}
\label{algorithm:RPNI:initializeRedBlue}
	\While{$\blue \neq \emptyset$} {\label{algorithm:RPNI:main_loop:begin}
		\KwPop{} the smallest element $\blueLoc$ \KwFrom{} $\blue$\label{algorithm:RPNI:popBlue}\;
		\If{$\exists \redLoc \in \red.\, \text{\FConsistent{$\redLoc, \blueLoc$}}$} {
			\KwMerge{} $\blueLoc$ \KwTo{} $\redLoc$\;
		} \Else{
			\KwPush{} $\blueLoc$ \KwTo{} $\red$\;\label{algorithm:RPNI:main_loop:end}
		}
	}
\end{algorithm}
%-----------------------------------------------------------

\cref{algorithm:RPNI} outlines the \LongVersion{\ac{rpni}}\ShortVersion{\acs{rpni}} algorithm.
In the \LongVersion{\ac{rpni}}\ShortVersion{\acs{rpni}} algorithm, we have two sets, $\red$ and $\blue$, of the states of the intermediate Mealy machine  $\A$:
$\red$ is the set of the states in the resulting Mealy machine;
$\blue$ is the set of the states that may not be in the resulting Mealy machine.
In the loop from \crefrange{algorithm:RPNI:main_loop:begin}{algorithm:RPNI:main_loop:end}, for each $\blueLoc \in \blue$, we try to find $\redLoc \in \red$ such that we can merge $\blueLoc$ to $\redLoc$.
For the convergence\LongVersion{ of the \LongVersion{\ac{rpni}}\ShortVersion{\acs{rpni}} algorithm}, we try merging from the root to the leaves of the \ac{ptmm}.\@
Therefore, in \cref{algorithm:RPNI:popBlue}, we examine the minimum state in $\blue$ according to the following order ${\preceq}$:
we have $\loc \preceq \loc'$ if there is $\wordIn \in \ActionsIn^*$ satisfying $\locinit \xrightarrow{\wordIn} \loc$ and for any $\wordIn' \in \ActionsIn^*$ satisfying $\locinit \xrightarrow{\wordIn'} \loc'$, we have either $|\wordIn| < |\wordIn'|$ or $|\wordIn| = |\wordIn'|$ and $\wordIn$ is smaller than $\wordIn'$ in the lexicographical order. 
%See \cref{figure:PTMM} for the example of the ordering.

\section{Detail of our definition of $\mindepth$}

\newcommand{\maxEpisodeLength}{\mathrm{MaxEpLen}}
\newcommand{\maxMinDepth}{\mindepth_\mathrm{max}}

Here, we show how we decide $\mindepth$ in our implementation.
Let $\runs$ be the set of the runs we have experienced before we evaluate $\mindepth$, $\maxEpisodeLength$ be the maximum length of each episode, and $\maxMinDepth$ be the maximum bound of $\mindepth$. The use of $\maxMinDepth$ is to prevent $\mindepth$ from being very large at the beginning of the learning. In our experiments, we used $\maxMinDepth = 5$.
\cref{equation:min_depth} shows the definition of $\mindepth$ in our implementation, where $\lceil A \rceil$ is the ceiling function.

\begin{equation}
 \mindepth = \min\left(|\runs| \frac{\lceil \maxEpisodeLength - \frac{1}{|\runs|} \sum_{\run \in \runs} |\run| \rceil}{\sum_{\run \in \runs} |\run|}, \maxMinDepth\right) 
  \label{equation:min_depth}
\end{equation}

As we discussed in \cref{subsection:min_depth}, $\mindepth{}$ is large when the mean length of the episodes is short.

\section{Detail of the benchmarks}%
\label{appendix:detail_benchmarks}

Here we show the detail of the benchmarks used in our experiments.

\subsection{\WaterTank}

\WaterTank{} is the benchmark described in \cref{example:water_tank:summary,example:water_tank:fsrs}. The input space is open and close. The output space is \{open, close\} $\times$ \{low, safe, high\} $\times$ \{valve violation, no violation\}. As explained in \cref{example:water_tank:summary}, the source of nondeterminism is the inflow and outflow of water. In this example, the undesired behavior is either having a switch violation or making the tank full or empty. 

\subsection{\GridWorld}

%-----------------------------------------------------------
\begin{figure}[tbp]
\begin{minipage}[t]{0.3\textwidth}
\end{minipage}
\hfill
\begin{minipage}[t]{0.12\textwidth}
 \begin{verbatim}
xxxxxxx
x    Ex
x xxx x
x xG  x
x xxx x
xS    x
xxxxxxx
 \end{verbatim}
\end{minipage}
\hfill
\begin{minipage}[t]{0.3\textwidth}
\end{minipage}
\caption{The arena of \GridWorld{}. ``x'' represents the wall, ``S'' represents the ego robot's starting point, ``G'' represents the goal, and ``E'' represents the other robot's starting point.}%
 \label{figure:screenshot_grid_world}
\end{figure}
%-----------------------------------------------------------
\GridWorld{} is a high-level robot control example in a $5 \times 5$ grid arena.
Two robots are in the arena; one of them is the ego robot we control, and the other robot randomly moves.
We note that the move of the other robot is encoded by actions of \Env{}, and the system is still deterministic in the sense that it is represented by an \ac{fsrs}.
The objective of the ego robot is to reach the goal area without hitting either the wall or the other robot.

The propositions used for shielding are as follows.
\begin{itemize}
 \item Either the ego robot hits the wall or not
 \item Either the ego robot hits the other robot or not
\end{itemize}
The undesired behavior used in the shield construction is to hit either the wall or the other robot.

\subsection{\Taxi}

\Taxi{}\cite{dietterich2000hierarchical} is a $5 \times 5$ grid map in which a taxi has to pick up a passenger from a location and drop it at a different one. The map has some barriers that the taxi can not cross (see \cref{figure:screenshot_taxi}). In our extended version of the problem, the taxi can break nondeterministically when hitting a barrier. Every time the taxi hits a barrier, it gets damaged. Each of these damages increases the likelihood of breaking the car at the next hit to a barrier. 
 Although the taxi nondeterministically becomes broken, our assumption on the system determinism still holds by letting the \Env{}'s action be ``Break'' when the taxi becomes broken.

%-----------------------------------------------------------
\begin{figure}[tbp]
	\begin{minipage}[t]{0.3\textwidth}
	\end{minipage}
	\hfill
	\begin{minipage}[t]{0.12\textwidth}
		\begin{verbatim}
		+---------+
		|R: | : :G|
		| : | : : |
		| : :T: : |
		| | : | : |
		|Y| : |B: |
		+---------+
		\end{verbatim}
	\end{minipage}
	\hfill
	\begin{minipage}[t]{0.3\textwidth}
	\end{minipage}
	\caption{The arena of \Taxi{}. ``T'' represents the taxi initial position, whereas ``R'',``G'',``Y'',``B'' represent possible pick-up and drop-off locations for the passenger. The taxi can cross cells separated by ``:'' but it is not allowed to cross ``\textbar''. }%
	\label{figure:screenshot_taxi}
\end{figure}
%-----------------------------------------------------------

The input space is the actions related to the movement of the taxi (south, north, east, west) and the actions related to the passenger (pick-up or drop-off). The taxi can observe if it is a successful movement or if it hits a barrier. It can also observe when it arrives at a drop-off/pick-up location, when it does a successful or wrong pick-up/drop-off, and when the passenger is inside the taxi.
It can also observe the position in the arena.

In this example, the undesired behavior can be produced by scratching the taxi, breaking the taxi, or doing wrong pick-up/drop-off, because all these actions can damage either the vehicle or the passenger.

\subsection{\CliffWalking}

\CliffWalking{}\cite{SuttonB98} is a $3 \times 13$ grid map in which the agent needs to go from the initial position to the final position without falling into a cliff (see \cref{figure:screenshot_cliff_walking}). The agent can move in four directions (south, north, east, and west), and it can only observe if it is a safe move, a fall into the cliff, or reached the goal. In this simple problem, the only source of the undesired behavior is falling into the cliff.

%-----------------------------------------------------------
\begin{figure}[tbp]
	\begin{minipage}[t]{1.5\textwidth}
	\end{minipage}
	\hspace{32mm}
	\begin{minipage}[t]{0.13\textwidth}
		\begin{verbatim}
		+-------------------------+
		| : : : : : : : : : : : : |
		| : : : : : : : : : : : : |
		|S:x:x:x:x:x:x:x:x:x:x:x:G|
		+-------------------------+
		\end{verbatim}
	\end{minipage}
	\hfill
	\begin{minipage}[t]{0.8\textwidth}
	\end{minipage}
	\caption{The arena of \CliffWalking{}. ``S'' represents the initial position, and ``G'' represents the target position. The cliff is denoted with ``x''. }%
	\label{figure:screenshot_cliff_walking}
\end{figure}
%-----------------------------------------------------------

\subsection{\SelfDrivingCar}
\begin{figure}[tbp]
 \centering
 \includegraphics[width=.6\linewidth,angle=0]{./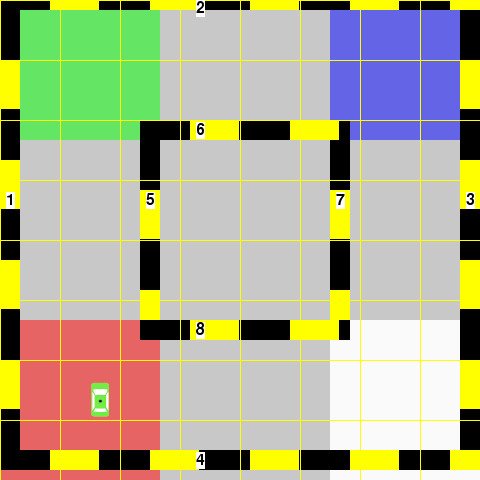}
 \caption{Screenshot of \SelfDrivingCar{}. The yellow-black stripes show the walls.\label{figure:screenshot_self_driving_car}}
 \end{figure}

\SelfDrivingCar{} benchmark is modified from a benchmark in~\cite{AlshiekhBEKNT18}.
The objective is to train a car in a $480 \times 480$ pixels arena to run in a clockwise direction without hitting the walls (see~\cref{figure:screenshot_self_driving_car}). 
The state of the car consists of its $xy$-position and heading angle. The car starts from $xy$-position $(100, 80)$ heading upward (heading angle = $\pi/2$).

A training step consists of 10 time steps in the simulation.
At each training step, the car chooses its action to be ``go-straight'', ``turn-left'', or ``turn-right''.
Based on its action, the car moves with the speed of 3 pixels per simulation time step, either following the same heading direction (go-straight) or changing the heading angle (turn-left or turn-right) by $\pi/40$ degrees per simulation time step.
For example, the car moves 30 pixels following its heading direction during a training step if it chooses ``go-straight''.

As in~\cite{AlshiekhBEKNT18},
the car is trained using a Deep Q-Network (DQN) with a Boltzmann
exploration policy. 
In each training step, the car
obtains a positive reward if it
 moves in a clockwise direction (determined using the start and end positions of the learning step), and a penalty
otherwise. 
A training episode ends if the car hits a wall or reaches 45 training steps. 
The training is successful if the car can visit green, blue, white, and red areas in order.

The undesired behavior used in the shield construction is to hit a wall.  
The input to the shield contains the information of $(\mathit{pos}, \mathit{dir}, \mathit{safe})$, where $\mathit{pos}$ is the abstract $xy$-position using $60 \times 60$ pixels grids, $\mathit{dir}$ is the abstract heading angle using the intervals in
$\{ \big[d\pi/4 - \pi/8, (d+1)\pi/4- \pi/8 \big) \mid d = 0,1, \ldots, 7 \}$,
% $([-\pi/8, \pi/8), [\pi/8, 3\pi/8), [3\pi/8, 5\pi/8), [5\pi/8, 7\pi/8))$, 
 and $ \mathit{safe}$ indicates whether the car hits a wall during the previous training step.

\subsection{\SideWalk}

%-----------------------------------------------------------
\begin{figure}[tbp]
 \centering
 \includegraphics[width=.3\linewidth,page=5,angle=270]{./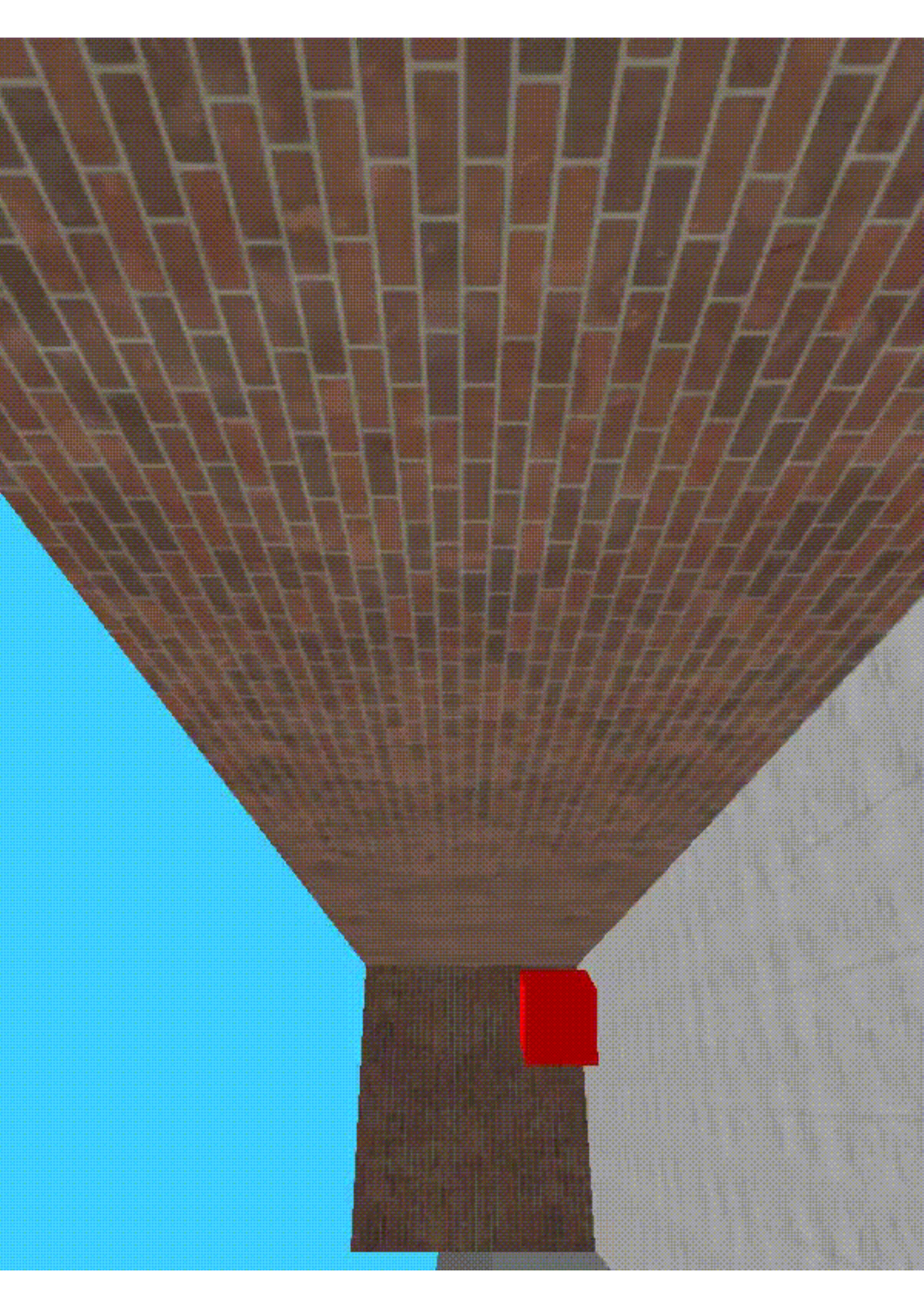}
 \caption{Screenshot of \SideWalk{}. The red box shows the goal area.}%
 \label{figure:screenshot_sidewalk}
\end{figure}
%-----------------------------------------------------------
\SideWalk{} is a variant of a benchmark in MiniWorld~\cite{gym_miniworld} on 3D robot simulation.
\cref{figure:screenshot_sidewalk} shows a screenshot of \SideWalk{}.
Its objective is to reach the goal area in the sidewalk without entering the roadway.

The following summarizes the modifications from the original benchmark.

%-----------------------------------------------------------
\begin{itemize}
 \item The random seed is chosen from 10 possible values.
 \item The domain randomization is enabled.
\end{itemize}
%-----------------------------------------------------------

The propositions used for shielding are as follows.
\begin{itemize}
 \item The position of the robot is
       \begin{ienumeration}
        \item in the roadway,
        \item in front of the wall, or
        \item the others.
       \end{ienumeration}
 \item The robot observes the cone 
       \begin{ienumeration}
        \item in the left of the image,
        \item in the right of the image,
        \item in both left and right of the image, or
        \item nowhere in the image.
       \end{ienumeration}
 \item The robot observes the goal area or not.
 \item The robot observes the wall in the center of the image or not.
\end{itemize}
These propositions on the robot's perception are determined by the colors of certain pixels of the observed image.
The undesired behavior used in the shield construction is to enter the roadway.

\subsection{\CarRacing}

%-----------------------------------------------------------
\begin{figure}[tbp]
 \centering
 \includegraphics[width=.5\linewidth,page=5,angle=270]{./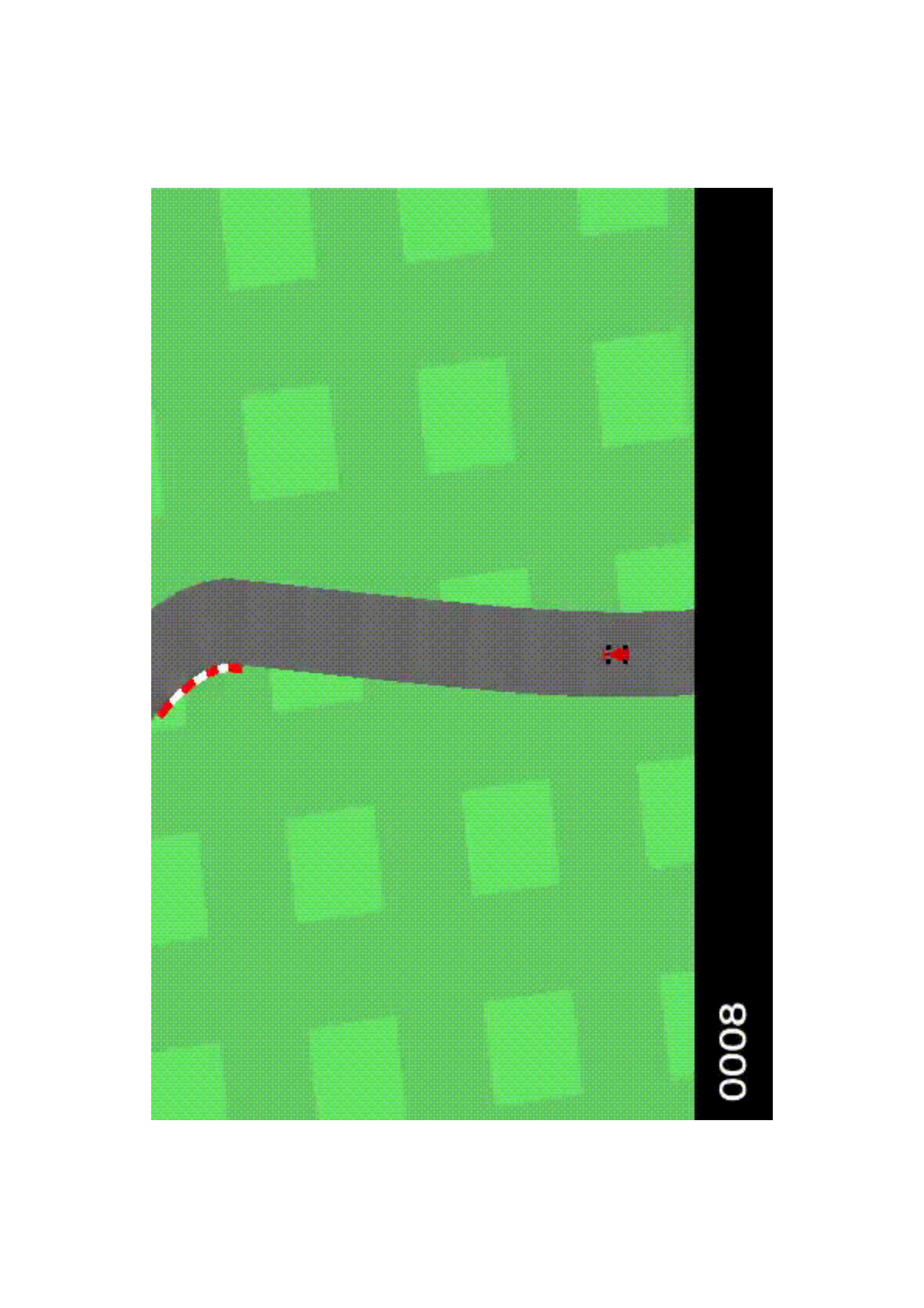}
 \caption{Screenshot of \CarRacing{}}%
 \label{figure:screenshot_car_racing}
\end{figure}
%-----------------------------------------------------------
\CarRacing{} is a variant of a benchmark in OpenAI Gym~\cite{BrockmanCPSSTZ16}.
\cref{figure:screenshot_car_racing} shows a screenshot of \CarRacing{}.
Its objective is to drive a car on a predetermined racing course.
The physical dynamics are simulated by Box2D\footnote{Box2D is available at \url{https://box2d.org/}}, a 2D physics engine for games.

The following summarizes the modifications from the original benchmark.
%-----------------------------------------------------------
\begin{itemize}
 \item The input space is discretized into ``NONE'',  ``ACCEL'',  ``RIGHT'',  ``LEFT'', and ``BRAKE''.
 \item The course creation is determinized by fixing the seed of the random number generator.
 \item The parameter of the course generation is modified so that the curves are smoother.
\end{itemize}
%-----------------------------------------------------------

The propositions used for shielding are as follows.
\begin{itemize}
 \item The position of the car is
       \begin{ienumeration}
        \item on the road,
        \item on the grass area, or
        \item out of the arena, \ie{} crashes to the wall.
       \end{ienumeration}
 \item The car observes that 
       \begin{ienumeration}
        \item the road curves to the left,
        \item the road curves to the right, or
        \item the road is straight.
       \end{ienumeration}
\end{itemize}
These propositions on the car's perception are determined from the colors of certain pixels of the observed image.
As criteria of a safety violation, we measured the number of episodes with \emph{spin behavior}, \ie{} where ego drives off the road and keeps rotating indefinitely.
The undesired behavior used in the shield construction is to deviate from the road for any consecutive steps, which is not a safety violation but tends to be a safety violation.

\section{Detailed hyperparameters used in our experiments}%
\label{appendix:detail_of_learning}

%-----------------------------------------------------------
\begin{table*}[tbp]
 \footnotesize
 \centering
 \caption{Detailed learning hyperparameters used in our experiments}%
 \label{table:detail_of_learning}
 \begin{tabular}[t]{lccccc}
  \toprule
  & Learning rate & Gamma & Batch size & \texttt{nb\_steps\_warmup} & \texttt{target\_model\_update}\\
  \midrule
  \WaterTank{} & $1 \times 10^{-4}$  & 0.99 & 64 & N/A & N/A\\
  \GridWorld{} & $3 \times 10^{-4}$  & 0.99 & 64 & N/A & N/A\\
  \CliffWalking{} & $1 \times 10^{-3}$ & 0.99 & 64 & N/A & N/A\\
  \Taxi{} & $1 \times 10^{-3}$ & 0.99 & 64 & N/A & N/A\\
  \SelfDrivingCar{} & $3 \times 10^{-4}$ & 0.99 & 32 & 1,000 & 10,000\\
  \SideWalk{} & $5 \times 10^{-5}$ & 0.99 & 64 & N/A & N/A\\
  \CarRacing{} & $3 \times 10^{-4}$ & 0.99 & 64 & N/A & N/A\\
  \bottomrule
 \end{tabular}
\end{table*}
%-----------------------------------------------------------

\cref{table:detail_of_learning} summarizes the detailed hyperparameter of our experiment.
We tuned these hyperparameters by a grid search when the default parameters defined by the libraries do not work well for the learning without shielding.
For the hyperparameters not in \cref{table:detail_of_learning}, we used the default parameters defined by the libraries.

%%%%%%%%%%%%%%%%%%%%%%%%%%%%%%%%%%%%%%%%%%%%%%%%%%%%%%%%%%%%
\section{Detailed experiment results and discussion}\label{appendix:detail_of_result}
%%%%%%%%%%%%%%%%%%%%%%%%%%%%%%%%%%%%%%%%%%%%%%%%%%%%%%%%%%%%

Here, we show some detailed experiment results and discussion.

\subsection{Efficiency of learning with dynamic shielding}

%-----------------------------------------------------------
\begin{figure}[tbp]
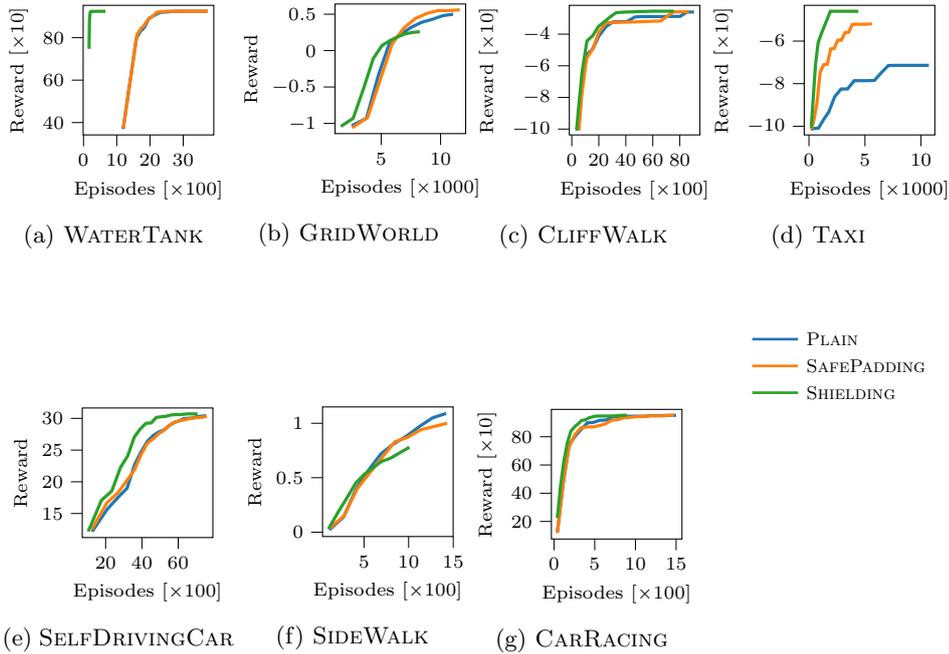

 \scriptsize
 \begin{subfigure}{.24\textwidth}
  \centering
  \input{./figs/episodes_eval-best_mean_reward_Benchmarks.WATER_TANK.tikz}%
  \caption{\WaterTank{}}
 \end{subfigure}
 \hfill
 \begin{subfigure}{.24\textwidth}
  \centering
  \input{./figs/episodes_eval-best_mean_reward_Benchmarks.GRID_WORLD.tikz}%
  \caption{\GridWorld{}}
 \end{subfigure}
 \hfill
 \begin{subfigure}{.24\textwidth}
  \centering
  \input{./figs/episodes_eval-best_mean_reward_Benchmarks.CLIFFWALKING.tikz}%
  \caption{\CliffWalking{}}
 \end{subfigure}
 \hfill
 \begin{subfigure}{.24\textwidth}
  \centering
  \input{./figs/episodes_eval-best_mean_reward_Benchmarks.TAXI.tikz}%
  \caption{\Taxi{}}
 \end{subfigure}
 \hfill
 \begin{subfigure}{.25\textwidth}
  \centering
  \input{./figs/episodes_eval-best_mean_reward_Benchmarks.SELF_DRIVING_CAR.tikz}%
  \caption{\SelfDrivingCar{}}
 \end{subfigure}
 \hfill
 \begin{subfigure}{.24\textwidth}
  \centering
  \input{./figs/episodes_eval-best_mean_reward_Benchmarks.SIDEWALK.tikz}%
  \caption{\SideWalk{}}
 \end{subfigure}
 \hfill
 \begin{subfigure}{.24\textwidth}
  \centering
  \input{./figs/episodes_eval-best_mean_reward_Benchmarks.CAR_RACING.tikz}%
  \caption{\CarRacing{}}
 \end{subfigure}
 \begin{minipage}[t]{.03\textwidth} 
  \hfill
 \end{minipage}
 \begin{minipage}[t]{.21\textwidth}
  \centering
  \input{./figs/legend}%
 \end{minipage}
 \caption{The mean of the number of episodes before each testing phase and the reward of the best controller obtained by the testing phase.}%
 \label{figure:summary_reward_vs_episodes}
\end{figure}
%-----------------------------------------------------------

%-----------------------------------------------------------
\begin{figure}[tbp]
 \scriptsize
 \begin{subfigure}{.24\textwidth}
  \centering
  \input{./figs/steps_eval-best_mean_reward_Benchmarks.WATER_TANK.tikz}%
  \caption{\WaterTank{}}
 \end{subfigure}
 \hfill
 \begin{subfigure}{.24\textwidth}
  \centering
  \input{./figs/steps_eval-best_mean_reward_Benchmarks.GRID_WORLD.tikz}%
  \caption{\GridWorld{}}
 \end{subfigure}
 \hfill
 \begin{subfigure}{.24\textwidth}
  \centering
  \input{./figs/steps_eval-best_mean_reward_Benchmarks.CLIFFWALKING.tikz}%
  \caption{\CliffWalking{}}
 \end{subfigure}
 \hfill
 \begin{subfigure}{.24\textwidth}
  \centering
  \input{./figs/steps_eval-best_mean_reward_Benchmarks.TAXI.tikz}%
  \caption{\Taxi{}}
 \end{subfigure}
 \hfill
 \begin{subfigure}{.25\textwidth}
  \centering
  \input{./figs/steps_eval-best_mean_reward_Benchmarks.SELF_DRIVING_CAR.tikz}%
  \caption{\SelfDrivingCar{}}
 \end{subfigure}
 \hfill
 \begin{subfigure}{.24\textwidth}
  \centering
  \input{./figs/steps_eval-best_mean_reward_Benchmarks.SIDEWALK.tikz}%
  \caption{\SideWalk{}}
 \end{subfigure}
 \hfill
 \begin{subfigure}{.24\textwidth}
  \centering
  \input{./figs/steps_eval-best_mean_reward_Benchmarks.CAR_RACING.tikz}%
  \caption{\CarRacing{}}
 \end{subfigure}
 \begin{minipage}[t]{.03\textwidth} 
  \hfill
 \end{minipage}
 \begin{minipage}[t]{.21\textwidth}
  \centering
  \input{./figs/legend}%
 \end{minipage}
 \caption{The mean of the number of steps before each testing phase and the reward of the best controller obtained by the testing phase.}%
 \label{figure:summary_reward_vs_steps}
\end{figure}
%-----------------------------------------------------------

%-----------------------------------------------------------
\begin{figure}[tbp]
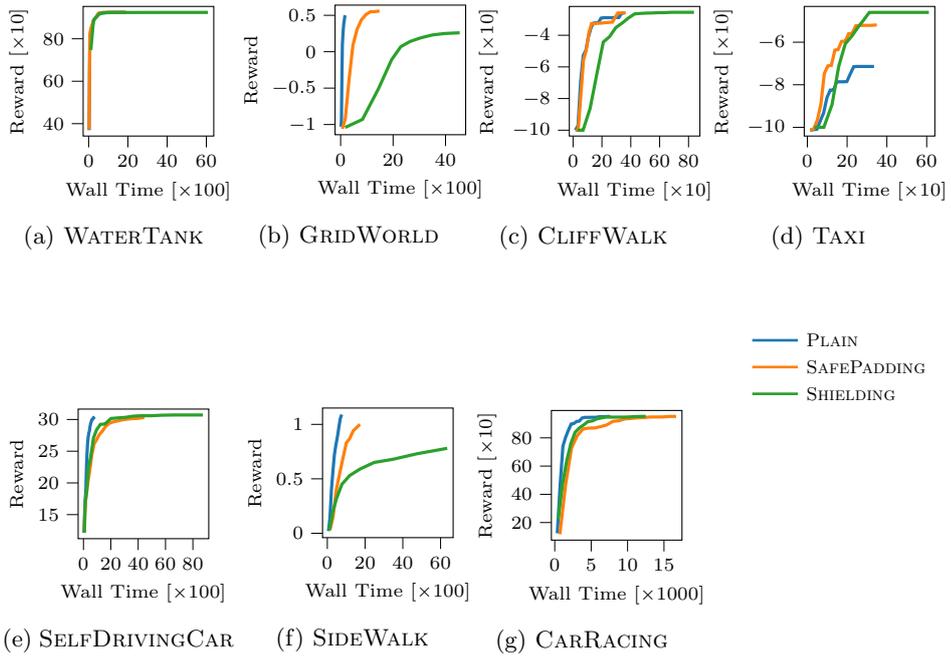

 \scriptsize
 \begin{subfigure}{.24\textwidth}
  \centering
  \input{./figs/eval-wall_time_eval-best_mean_reward_Benchmarks.WATER_TANK.tikz}%
  \caption{\WaterTank{}}
 \end{subfigure}
 \hfill
 \begin{subfigure}{.24\textwidth}
  \centering
  \input{./figs/eval-wall_time_eval-best_mean_reward_Benchmarks.GRID_WORLD.tikz}%
  \caption{\GridWorld{}}
 \end{subfigure}
 \hfill
 \begin{subfigure}{.24\textwidth}
  \centering
  \input{./figs/eval-wall_time_eval-best_mean_reward_Benchmarks.CLIFFWALKING.tikz}%
  \caption{\CliffWalking{}}
 \end{subfigure}
 \hfill
 \begin{subfigure}{.24\textwidth}
  \centering
  \input{./figs/eval-wall_time_eval-best_mean_reward_Benchmarks.TAXI.tikz}%
  \caption{\Taxi{}}
 \end{subfigure}
 \hfill
 \begin{subfigure}{.25\textwidth}
  \centering
  \input{./figs/eval-wall_time_eval-best_mean_reward_Benchmarks.SELF_DRIVING_CAR.tikz}%
  \caption{\SelfDrivingCar{}}
 \end{subfigure}
 \hfill
 \begin{subfigure}{.24\textwidth}
  \centering
  \input{./figs/eval-wall_time_eval-best_mean_reward_Benchmarks.SIDEWALK.tikz}%
  \caption{\SideWalk{}}
 \end{subfigure}
 \hfill
 \begin{subfigure}{.24\textwidth}
  \centering
  \input{./figs/eval-wall_time_eval-best_mean_reward_Benchmarks.CAR_RACING.tikz}%
  \caption{\CarRacing{}}
 \end{subfigure}
 \begin{minipage}[t]{.03\textwidth} 
  \hfill
 \end{minipage}
 \begin{minipage}[t]{.21\textwidth}
  \centering
  \input{./figs/legend}%
 \end{minipage}
 \caption{The mean of the total execution time before each testing phase and the reward of the best controller obtained by the testing phase.}%
 \label{figure:summary_reward_vs_execution_time}
\end{figure}
%-----------------------------------------------------------

\cref{figure:summary_reward_vs_episodes,figure:summary_reward_vs_steps,figure:summary_reward_vs_execution_time} show the mean of the reward of the best controllers obtained by each testing phase and the number of episodes, the number of steps, and the total execution time before the testing phase, respectively.

In \cref{figure:summary_reward_vs_episodes}, we observe that \DynamicShielding{} usually requires a similar number of episodes to obtain a controller with similar performance compared to \NoShield{} or \SafePadding{}.
We also observe that \DynamicShielding{} sometimes requires fewer episodes to obtain a controller with similar performance than \NoShield{} or \SafePadding{} (\eg{} \WaterTank{} and \Taxi{}).
This improvement is likely because \DynamicShielding{} successfully prevented useless exploration.

%-----------------------------------------------------------
\begin{figure}[tbp]
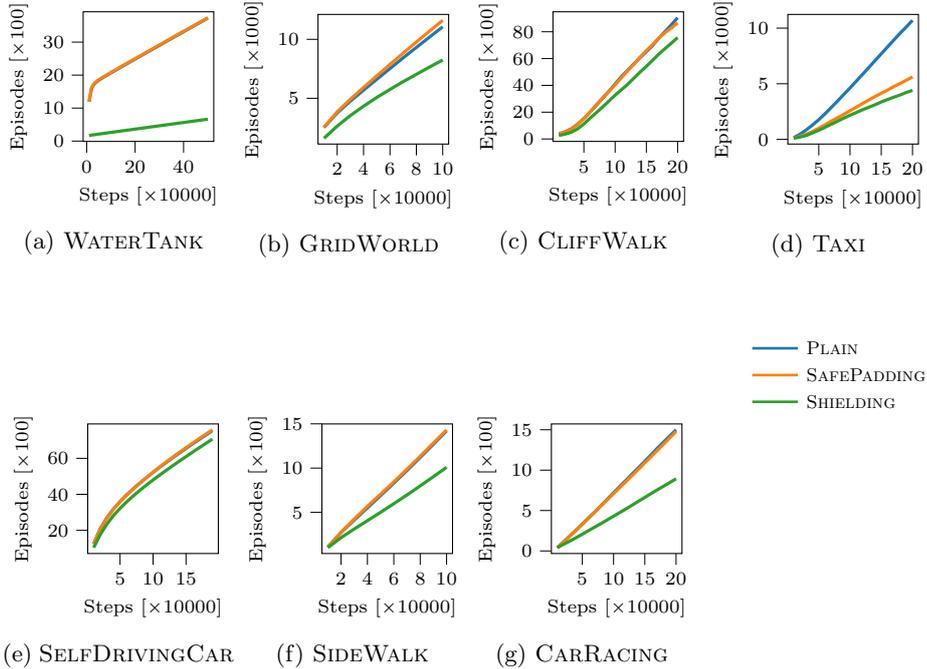

 \scriptsize
 \begin{subfigure}{.24\textwidth}
  \centering
  \input{./figs/steps_episodes_Benchmarks.WATER_TANK.tikz}%
  \caption{\WaterTank{}}
 \end{subfigure}
 \hfill
 \begin{subfigure}{.24\textwidth}
  \centering
  \input{./figs/steps_episodes_Benchmarks.GRID_WORLD.tikz}%
  \caption{\GridWorld{}}
 \end{subfigure}
 \hfill
 \begin{subfigure}{.24\textwidth}
  \centering
  \input{./figs/steps_episodes_Benchmarks.CLIFFWALKING.tikz}%
  \caption{\CliffWalking{}}
 \end{subfigure}
 \hfill
 \begin{subfigure}{.24\textwidth}
  \centering
  \input{./figs/steps_episodes_Benchmarks.TAXI.tikz}%
  \caption{\Taxi{}}
 \end{subfigure}
 \hfill
 \begin{subfigure}{.25\textwidth}
  \centering
  \input{./figs/steps_episodes_Benchmarks.SELF_DRIVING_CAR.tikz}%
  \caption{\SelfDrivingCar{}}
 \end{subfigure}
 \hfill
 \begin{subfigure}{.24\textwidth}
  \centering
  \input{./figs/steps_episodes_Benchmarks.SIDEWALK.tikz}%
  \caption{\SideWalk{}}
 \end{subfigure}
 \hfill
 \begin{subfigure}{.24\textwidth}
  \centering
  \input{./figs/steps_episodes_Benchmarks.CAR_RACING.tikz}%
  \caption{\CarRacing{}}
 \end{subfigure}
 \begin{minipage}[t]{.03\textwidth} 
  \hfill
 \end{minipage}
 \begin{minipage}[t]{.21\textwidth}
  \centering
  \input{./figs/legend}%
 \end{minipage}
 \caption{The mean of the number of training steps and the number of training episodes before each testing phase.}%
 \label{figure:episodes_vs_steps}
\end{figure}
%-----------------------------------------------------------
In \cref{figure:summary_reward_vs_steps}, we observe that \DynamicShielding{} usually requires a similar number of steps to obtain a controller with similar performance compared with \NoShield{} or \SafePadding{}.
We also observe that \DynamicShielding{} sometimes requires more episodes to obtain a controller with similar performance than \NoShield{} or \SafePadding{} (\eg{} \GridWorld{} and \SideWalk{}).
This is likely because \DynamicShielding{} tends to make each episode longer by preventing failures, and the number of steps to experience a similar number of episodes tends to be larger. We can observe this tendency also in \cref{figure:episodes_vs_steps}, which shows the relationship between the number of training episodes and the training steps before each testing phase.

In \cref{figure:summary_reward_vs_execution_time}, we observe that \DynamicShielding{} usually takes a much longer time to obtain a controller with similar performance compared with \NoShield{} or \SafePadding{}.
As we discussed in \cref{subsection:time_efficiency} this is mainly because of the overhead of shield construction.

\subsection{Safety of learning with dynamic shielding}

%-----------------------------------------------------------
\begin{figure}[tbp]
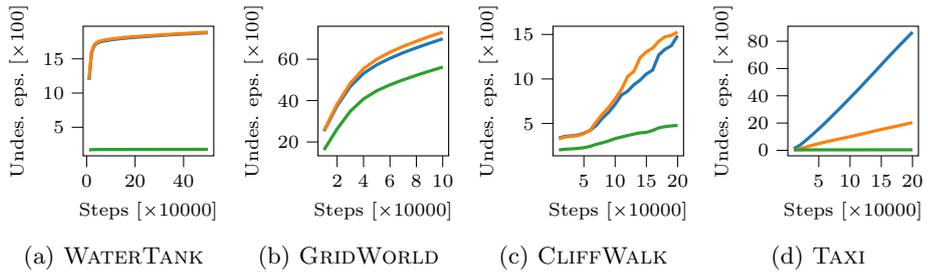
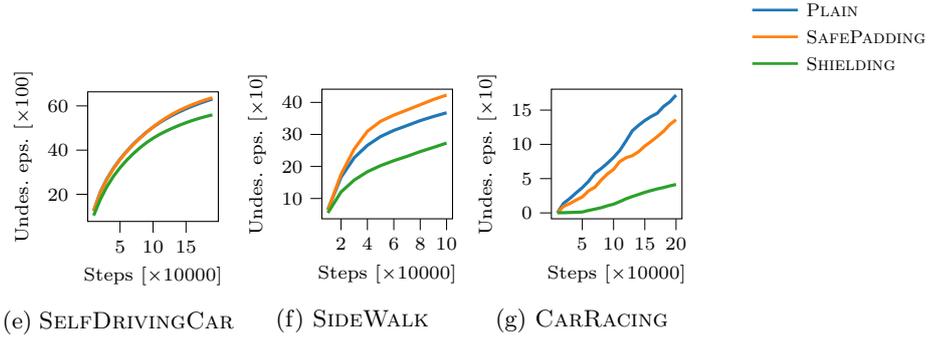

 \scriptsize
 \begin{subfigure}{.24\textwidth}
  \centering
  \input{./figs/steps_crash_episodes_Benchmarks.WATER_TANK.tikz}%
  \caption{\WaterTank{}}
 \end{subfigure}
 \hfill
 \begin{subfigure}{.24\textwidth}
  \centering
  \input{./figs/steps_crash_episodes_Benchmarks.GRID_WORLD.tikz}%
  \caption{\GridWorld{}}
 \end{subfigure}
 \hfill
 \begin{subfigure}{.24\textwidth}
  \centering
  \input{./figs/steps_crash_episodes_Benchmarks.CLIFFWALKING.tikz}%
  \caption{\CliffWalking{}}
 \end{subfigure}
 \hfill
 \begin{subfigure}{.24\textwidth}
  \centering
  \input{./figs/steps_crash_episodes_Benchmarks.TAXI.tikz}%
  \caption{\Taxi{}}\label{figure:crashes_vs_steps:taxi}
 \end{subfigure}
 \hfill
 \begin{subfigure}{.25\textwidth}
  \centering
  \input{./figs/steps_crash_episodes_Benchmarks.SELF_DRIVING_CAR.tikz}%
  \caption{\SelfDrivingCar{}}
 \end{subfigure}
 \hfill
 \begin{subfigure}{.24\textwidth}
  \centering
  \input{./figs/steps_crash_episodes_Benchmarks.SIDEWALK.tikz}%
  \caption{\SideWalk{}}
 \end{subfigure}
 \hfill
 \begin{subfigure}{.24\textwidth}
  \centering
  \input{./figs/steps_crash_episodes_Benchmarks.CAR_RACING.tikz}%
  \caption{\CarRacing{}}
 \end{subfigure}
 \begin{minipage}[t]{.03\textwidth} 
  \hfill
 \end{minipage}
 \begin{minipage}[t]{.21\textwidth}
  \centering
  \input{./figs/legend}%
 \end{minipage}
 \caption{The mean of the number of undesired training episodes and the number of steps before each testing phase}%
 \label{figure:crashes_vs_steps}
\end{figure}
%-----------------------------------------------------------

%-----------------------------------------------------------
\begin{figure}[tbp]
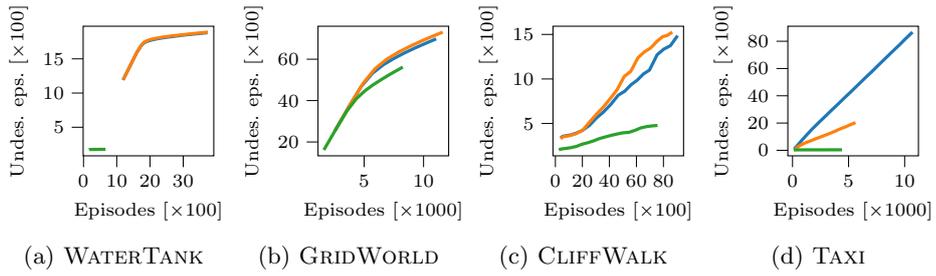
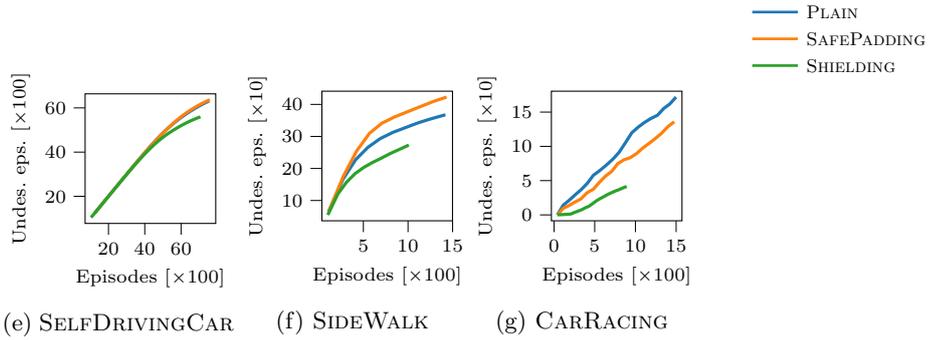

 \scriptsize
 \begin{subfigure}{.24\textwidth}
  \centering
  \input{./figs/episodes_crash_episodes_Benchmarks.WATER_TANK.tikz}%
  \caption{\WaterTank{}}
 \end{subfigure}
 \hfill
 \begin{subfigure}{.24\textwidth}
  \centering
  \input{./figs/episodes_crash_episodes_Benchmarks.GRID_WORLD.tikz}%
  \caption{\GridWorld{}}
 \end{subfigure}
 \hfill
 \begin{subfigure}{.24\textwidth}
  \centering
  \input{./figs/episodes_crash_episodes_Benchmarks.CLIFFWALKING.tikz}%
  \caption{\CliffWalking{}}
 \end{subfigure}
 \hfill
 \begin{subfigure}{.24\textwidth}
  \centering
  \input{./figs/episodes_crash_episodes_Benchmarks.TAXI.tikz}%
  \caption{\Taxi{}}
 \end{subfigure}
 \hfill
 \begin{subfigure}{.25\textwidth}
  \centering
  \input{./figs/episodes_crash_episodes_Benchmarks.SELF_DRIVING_CAR.tikz}%
  \caption{\SelfDrivingCar{}}
 \end{subfigure}
 \hfill
 \begin{subfigure}{.24\textwidth}
  \centering
  \input{./figs/episodes_crash_episodes_Benchmarks.SIDEWALK.tikz}%
  \caption{\SideWalk{}}
 \end{subfigure}
 \hfill
 \begin{subfigure}{.24\textwidth}
  \centering
  \input{./figs/episodes_crash_episodes_Benchmarks.CAR_RACING.tikz}%
  \caption{\CarRacing{}}
 \end{subfigure}
 \begin{minipage}[t]{.03\textwidth} 
  \hfill
 \end{minipage}
 \begin{minipage}[t]{.21\textwidth}
  \centering
  \input{./figs/legend}%
 \end{minipage}
 \caption{The mean of the number of undesired training episodes and the number of training episodes before each testing phase}%
 \label{figure:crashes_vs_episodes}
\end{figure}
%-----------------------------------------------------------

\cref{figure:crashes_vs_steps,figure:crashes_vs_episodes} show the mean of the reward of the number of undesired training episodes and the number of training steps and episodes, respectively.
In \cref{figure:crashes_vs_steps}, we observe that the number of training episodes with undesired behavior tends to be smaller when using \DynamicShielding{}.
This confirms the experiment results we observed and discussed in \cref{subsection:safety}.
In \cref{figure:crashes_vs_steps}, we also observe that the curve of \NoShield{} is steeper than \DynamicShielding{} in \CliffWalking{}, \Taxi{}, and \CarRacing{}.
This shows that when we train a controller without shielding, the learning agent keeps causing undesired behaviors while dynamic shielding prevents them at least partially.
In \cref{figure:crashes_vs_steps:taxi}, we observe that although the curve of \NoShield{} is steeper than \SafePadding{}, 
the curve of \SafePadding{} is still steeper than \DynamicShielding{}.
This shows that \SafePadding{} also prevents causing some undesired explorations, \DynamicShielding{} prevents more undesired explorations.
\cref{figure:crashes_vs_episodes} also shows a similar tendency.

\subsection{Safety of controllers learned with dynamic shielding}

%-----------------------------------------------------------
\begin{figure}[tbp]
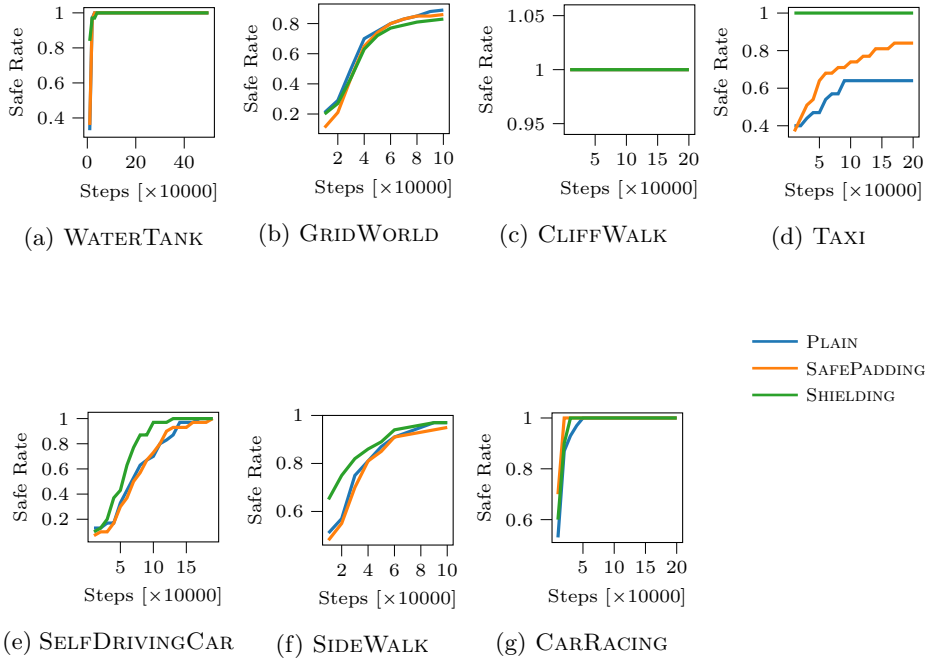

 \scriptsize
 \begin{subfigure}{.24\textwidth}
  \centering
  \input{./figs/steps_eval-best_safe_rate_Benchmarks.WATER_TANK.tikz}%
  \caption{\WaterTank{}}
 \end{subfigure}
 \hfill
 \begin{subfigure}{.24\textwidth}
  \centering
  \input{./figs/steps_eval-best_safe_rate_Benchmarks.GRID_WORLD.tikz}%
  \caption{\GridWorld{}}
 \end{subfigure}
 \hfill
 \begin{subfigure}{.24\textwidth}
  \centering
  \input{./figs/steps_eval-best_safe_rate_Benchmarks.CLIFFWALKING.tikz}%
  \caption{\CliffWalking{}}
 \end{subfigure}
 \hfill
 \begin{subfigure}{.24\textwidth}
  \centering
  \input{./figs/steps_eval-best_safe_rate_Benchmarks.TAXI.tikz}%
  \caption{\Taxi{}}
 \end{subfigure}
 \hfill
 \begin{subfigure}{.25\textwidth}
  \centering
  \input{./figs/steps_eval-best_safe_rate_Benchmarks.SELF_DRIVING_CAR.tikz}%
  \caption{\SelfDrivingCar{}}
 \end{subfigure}
 \hfill
 \begin{subfigure}{.24\textwidth}
  \centering
  \input{./figs/steps_eval-best_safe_rate_Benchmarks.SIDEWALK.tikz}%
  \caption{\SideWalk{}}
 \end{subfigure}
 \hfill
 \begin{subfigure}{.24\textwidth}
  \centering
  \input{./figs/steps_eval-best_safe_rate_Benchmarks.CAR_RACING.tikz}%
  \caption{\CarRacing{}}
 \end{subfigure}
 \begin{minipage}[t]{.03\textwidth} 
  \hfill
 \end{minipage}
 \begin{minipage}[t]{.21\textwidth}
  \centering
  \input{./figs/legend}%
 \end{minipage}
 \caption{The mean of the training steps before each testing phase and the safe rate of the best controller obtained by the testing phase.}%
 \label{figure:steps_vs_safe_rate}
\end{figure}
%-----------------------------------------------------------
%-----------------------------------------------------------
\begin{figure}[tbp]
 \scriptsize
 \begin{subfigure}{.24\textwidth}
  \centering
  \input{./figs/episodes_eval-best_safe_rate_Benchmarks.WATER_TANK.tikz}%
  \caption{\WaterTank{}}
 \end{subfigure}
 \hfill
 \begin{subfigure}{.24\textwidth}
  \centering
  \input{./figs/episodes_eval-best_safe_rate_Benchmarks.GRID_WORLD.tikz}%
  \caption{\GridWorld{}}
 \end{subfigure}
 \hfill
 \begin{subfigure}{.24\textwidth}
  \centering
  \input{./figs/episodes_eval-best_safe_rate_Benchmarks.CLIFFWALKING.tikz}%
  \caption{\CliffWalking{}}
 \end{subfigure}
 \hfill
 \begin{subfigure}{.24\textwidth}
  \centering
  \input{./figs/episodes_eval-best_safe_rate_Benchmarks.TAXI.tikz}%
  \caption{\Taxi{}}
 \end{subfigure}
 \hfill
 \begin{subfigure}{.25\textwidth}
  \centering
  \input{./figs/episodes_eval-best_safe_rate_Benchmarks.SELF_DRIVING_CAR.tikz}%
  \caption{\SelfDrivingCar{}}
 \end{subfigure}
 \hfill
 \begin{subfigure}{.24\textwidth}
  \centering
  \input{./figs/episodes_eval-best_safe_rate_Benchmarks.SIDEWALK.tikz}%
  \caption{\SideWalk{}}
 \end{subfigure}
 \hfill
 \begin{subfigure}{.24\textwidth}
  \centering
  \input{./figs/episodes_eval-best_safe_rate_Benchmarks.CAR_RACING.tikz}%
  \caption{\CarRacing{}}
 \end{subfigure}
 \begin{minipage}[t]{.03\textwidth} 
  \hfill
 \end{minipage}
 \begin{minipage}[t]{.21\textwidth}
  \centering
  \input{./figs/legend}%
 \end{minipage}
 \caption{The mean of the training episodes before each testing phase and the safe rate of the best controller obtained by the testing phase.}%
 \label{figure:episodes_vs_safe_rate}
\end{figure}
%-----------------------------------------------------------
%-----------------------------------------------------------
\begin{figure}[tbp]
 \scriptsize
 \begin{subfigure}{.24\textwidth}
  \centering
  \input{./figs/crash_episodes_eval-best_safe_rate_Benchmarks.WATER_TANK.tikz}%
  \caption{\WaterTank{}}
 \end{subfigure}
 \hfill
 \begin{subfigure}{.24\textwidth}
  \centering
  \input{./figs/crash_episodes_eval-best_safe_rate_Benchmarks.GRID_WORLD.tikz}%
  \caption{\GridWorld{}}
 \end{subfigure}
 \hfill
 \begin{subfigure}{.24\textwidth}
  \centering
  \input{./figs/crash_episodes_eval-best_safe_rate_Benchmarks.CLIFFWALKING.tikz}%
  \caption{\CliffWalking{}}
 \end{subfigure}
 \hfill
 \begin{subfigure}{.24\textwidth}
  \centering
  \input{./figs/crash_episodes_eval-best_safe_rate_Benchmarks.TAXI.tikz}%
  \caption{\Taxi{}}
 \end{subfigure}
 \hfill
 \begin{subfigure}{.25\textwidth}
  \centering
  \input{./figs/crash_episodes_eval-best_safe_rate_Benchmarks.SELF_DRIVING_CAR.tikz}%
  \caption{\SelfDrivingCar{}}
 \end{subfigure}
 \hfill
 \begin{subfigure}{.24\textwidth}
  \centering
  \input{./figs/crash_episodes_eval-best_safe_rate_Benchmarks.SIDEWALK.tikz}%
  \caption{\SideWalk{}}
 \end{subfigure}
 \hfill
 \begin{subfigure}{.24\textwidth}
  \centering
  \input{./figs/crash_episodes_eval-best_safe_rate_Benchmarks.CAR_RACING.tikz}%
  \caption{\CarRacing{}}
 \end{subfigure}
 \begin{minipage}[t]{.03\textwidth} 
  \hfill
 \end{minipage}
 \begin{minipage}[t]{.21\textwidth}
  \centering
  \input{./figs/legend}%
 \end{minipage}
 \caption{The mean of the training episodes with undesired behavior before each testing phase and the safe rate of the best controller obtained by the testing phase.}%
 \label{figure:crash_episodes_vs_safe_rate}
\end{figure}
%-----------------------------------------------------------

\cref{figure:steps_vs_safe_rate,figure:episodes_vs_safe_rate,figure:crash_episodes_vs_safe_rate} show the mean safe rate of the best controller obtained by each testing phase and the  number of training steps and episodes and the number of episodes with undesired behavior by the testing phase, respectively.
In \cref{figure:steps_vs_safe_rate,figure:episodes_vs_safe_rate,figure:crash_episodes_vs_safe_rate}, we generally observe that in most of the benchmarks, the curve of \DynamicShielding{} is growing faster than of those of \NoShield{} and \SafePadding{}.
This shows that \DynamicShielding{} successfully prevented  undesired explorations.

\subsection{Other experiment results}

%-----------------------------------------------------------
\begin{table*}[tbp]
 \centering
 \caption{The mean and the standard deviation of the number of the training episodes with undesired behavior out of 30 trials}%
 \label{table:detail_of_safety_violations}
 \small
 \begin{tabular}{lcccccccccccc}
  \toprule
  {} & \multicolumn{2}{c}{\NoShield{}} & \multicolumn{2}{c}{\SafePadding{}} & \multicolumn{2}{c}{\DynamicShielding{}} \\
  {} & mean & std & mean & std & mean & std \\
  \midrule
  \WaterTank{} & 1883.67 & 83.02 & 1892.40 & 79.93 & 177.13 & 28.04 \\
  \GridWorld{} & 6996.40 & 1510.04 & 7322.23 & 1399.39 & 5623.43 & 1256.44 \\
  \CliffWalking{} & 1493.20 & 1523.12 & 1528.67 & 1058.68 & 478.20 & 155.09 \\
  \Taxi{} & 8723.13 & 5593.19 & 2057.33 & 1406.02 & 37.77 & 19.79 \\
  \SelfDrivingCar{} & 6403.07 & 678.95 & 6454.60 & 895.09 & 5662.40 & 582.88 \\
  \SideWalk{} & 373.60 & 386.11 & 427.93 & 399.94 & 273.37 & 314.05 \\
  \CarRacing{} & 180.13 & 136.85 & 141.17 & 132.76 & 41.73 & 66.79 \\
  \bottomrule
 \end{tabular}
\end{table*}
%-----------------------------------------------------------

%-----------------------------------------------------------
\begin{table*}[tbp]
 \centering
 \caption{The mean and the standard deviation of the mean reward out of 30 trials}%
 \label{table:detail_of_success_rate}
 \small
 \begin{tabular}{lcccccccccccc}
  \toprule
  {} & \multicolumn{2}{c}{\NoShield{}} & \multicolumn{2}{c}{\SafePadding{}} & \multicolumn{2}{c}{\DynamicShielding{}} \\
  {} & mean & std & mean & std & mean & std \\
  \midrule
  \WaterTank{} & 918.89 & 9.50 & 919.81 & 8.15 & 921.81 & 3.46 \\
  \GridWorld{} & 0.37 & 0.55 & 0.46 & 0.54 & 0.07 & 0.59 \\
  \CliffWalking{} & -69.13 & 41.93 & -66.00 & 42.36 & -65.93 & 42.44 \\
  \Taxi{} & -147.61 & 222.32 & -139.62 & 224.60 & -92.93 & 26.90 \\
  \SelfDrivingCar{} & 28.83 & 3.04 & 28.86 & 2.18 & 29.81 & 0.55 \\
  \SideWalk{} & 0.93 & 0.44 & 0.90 & 0.50 & 0.67 & 0.54 \\
  \CarRacing{} & 375.53 & 419.98 & 509.25 & 450.89 & 622.07 & 346.90 \\
  \bottomrule
 \end{tabular}
\end{table*}
%-----------------------------------------------------------

%-----------------------------------------------------------
\begin{table*}[tbp]
 \centering
 \caption{The mean and the standard deviation of the execution time (in seconds) out of 30 trials}%
 \label{table:detail_of_execution_time}
 \small
 \begin{tabular}{lcccccccccccc}
  \toprule
  {} & \multicolumn{2}{c}{\NoShield{}} & \multicolumn{2}{c}{\SafePadding{}} & \multicolumn{2}{c}{\DynamicShielding{}} \\
  {} & mean & std & mean & std & mean & std \\
  \midrule
  \WaterTank{} & 1860.46 & 59.08 & 1947.09 & 80.11 & 6080.89 & 4055.24 \\
  \GridWorld{} & 177.18 & 13.07 & 1487.10 & 125.03 & 4548.70 & 1667.95 \\
  \CliffWalking{} & 355.18 & 11.82 & 365.54 & 13.97 & 839.06 & 148.45 \\
  \Taxi{} & 336.04 & 10.22 & 349.78 & 12.07 & 611.87 & 124.57 \\
  \SelfDrivingCar{} & 865.55 & 3.47 & 4919.13 & 168.00 & 10087.18 & 707.48 \\
  \SideWalk{} & 762.67 & 64.92 & 1734.66 & 146.83 & 6395.73 & 1460.23 \\
  \CarRacing{} & 7650.38 & 811.20 & 16694.63 & 576.96 & 12532.04 & 667.43 \\
  \bottomrule
 \end{tabular}
\end{table*}
%-----------------------------------------------------------

%-----------------------------------------------------------
\begin{table*}[tbp]
 \centering
 \caption{The difference of the mean execution time (in seconds) out of 30 trials.}%
 \label{table:difference_execution_time}
 \small
 \begin{tabular}{lcccccccccccc}
  \toprule
  {} & $\DynamicShielding{} - \NoShield{}$ & $\SafePadding{} - \NoShield{}$ \\
  \midrule
  \WaterTank{} & 4220.43 & 86.62 \\
  \GridWorld{} & 4371.53 & 1309.92 \\
  \CliffWalking{} & 483.88 & 10.37 \\
  \Taxi{} & 275.83 & 13.74 \\
  \SelfDrivingCar{} & 9221.62 & 4053.57 \\
  \SideWalk{} & 5633.06 & 971.99 \\
  \CarRacing{} & 4881.66 & 9044.25 \\
  \bottomrule
 \end{tabular}
\end{table*}
%-----------------------------------------------------------

\cref{table:detail_of_safety_violations,table:detail_of_success_rate,,table:detail_of_execution_time} show the mean and the standard deviation of
\begin{oneenumeration}
 \item the number of the training episodes with undesired behavior,
 \item the success rate, and
 \item the total execution time (in seconds)
\end{oneenumeration}
out of 30 trials, respectively.
\cref{table:difference_execution_time} shows the mean of the difference of the total execution time between \DynamicShielding{} or \SafePadding{}, and \NoShield{}.
\fi
\end{document}